\newcommand{\R}{\mathbb{R}}
\newcommand{\N}{\mathbb{N}}
\DeclareMathOperator{\dist}{dist}
\DeclareMathOperator{\MST}{MST}
\newcommand{\abs}[1]{\left\lvert #1 \right\rvert}
\renewcommand{\epsilon}{\varepsilon}
\newcommand{\cupdot}{\mathbin{\mathaccent\cdot\cup}}
\renewcommand{\l}{\ell}
\theoremstyle{thmstyleone}%
\newtheorem{theorem}{Theorem}%  meant for continuous numbers
\newtheorem{claim}[theorem]{Claim}
\theoremstyle{thmstyletwo}%
\theoremstyle{thmstylethree}%
\newenvironment{subproof}[1][\proofname]{%
	\begin{proof}[#1]%
	}{%
	\end{proof}%
}
\begin{document}

\title[Poly-Size ReLU Neural Networks for Maximum Flow Computation]{ReLU Neural Networks of Polynomial Size for Exact Maximum Flow Computation\footnote[2]{This is the authors' accepted manuscript of an article published in the journal \emph{Mathematical Programming}; see \url{https://doi.org/10.1007/s10107-024-02096-x}. An extended abstract of this article appeared in the proceedings of the \emph{24th Conference on Integer Programming and Combinatorial Optimization}~\cite{hertrich2023relu}.}}

%%=============================================================%%
%% Prefix	-> \pfx{Dr}
%% GivenName	-> \fnm{Joergen W.}
%% Particle	-> \spfx{van der} -> surname prefix
%% FamilyName	-> \sur{Ploeg}
%% Suffix	-> \sfx{IV}
%% NatureName	-> \tanm{Poet Laureate} -> Title after name
%% Degrees	-> \dgr{MSc, PhD}
%% \author*[1,2]{\pfx{Dr} \fnm{Joergen W.} \spfx{van der} \sur{Ploeg} \sfx{IV} \tanm{Poet Laureate} 
%%                 \dgr{MSc, PhD}}\email{iauthor@gmail.com}
%%=============================================================%%

\author*[1]{\fnm{Christoph} \sur{Hertrich}}\email{christoph.hertrich@ulb.be}

\author*[2]{\fnm{Leon} \sur{Sering}}\email{sering@math.ethz.ch}

\affil[1]{\orgdiv{D\'epartement de Math\'ematique}, \orgname{Universit\'e libre de Bruxelles}, \orgaddress{%\street{Boulevard du Triomphe}, \city{Brussels}, \postcode{B-1050}, 
		\country{Belgium}}}

\affil[2]{\orgdiv{Department of Mathematics}, \orgname{ETH Z\"urich}, \orgaddress{%\street{R\"amistra\ss e 101}, \city{Zurich}, \postcode{8092}, 
		\country{Switzerland}}}

%%==================================%%
%% sample for unstructured abstract %%
%%==================================%%

\abstract{This paper studies the expressive power of artificial neural networks with rectified linear units. In order to study them as a model of \emph{real-valued} computation, we introduce the concept of \emph{Max-Affine Arithmetic Programs} and show equivalence between them and neural networks concerning natural complexity measures. We then use this result to show that two fundamental combinatorial optimization problems can be solved with polynomial-size neural networks. First, we show that for any undirected graph with $n$ nodes, there is a neural network (with fixed weights and biases) of size $\mathcal{O}(n^3)$ that takes the edge weights as input and computes the value of a minimum spanning tree of the graph. Second, we show that for any directed graph with $n$ nodes and $m$ arcs, there is a neural network of size $\mathcal{O}(m^2n^2)$ that takes the arc capacities as input and computes a maximum flow.
Our results imply that these two problems can be solved with strongly polynomial time algorithms that solely use affine transformations and maxima computations, but no comparison-based branchings.}

\keywords{Neural Network Expressivity, Strongly Polynomial Algorithms, Minimum Spanning Tree Problem, Maximum Flow Problem.}

%%\pacs[MSC Classification]{}

\maketitle

\section{Introduction}

Artificial neural networks (NNs) achieved breakthrough results in various application domains like computer vision, natural language processing, autonomous driving, and many more~\cite{LeCunBengioHinton:DeepLearning}.
Also in the field of combinatorial optimization (CO), promising approaches to utilize NNs for problem solving or improving classical solution methods have been introduced~\cite{BengioLodiProuvost:MLforCO}.
However, the theoretical understanding of NNs still lags far behind these empirical successes.

All neural networks considered in this paper are \emph{feedforward neural networks with rectified linear unit (ReLU) activations}, one of the most popular models in practice~\cite{glorot2011deep}. These NNs are directed, acyclic, computational graphs in which each edge is equipped with a fixed weight and each node with a fixed bias. Each node (\emph{neuron}) computes an affine transformation of the outputs of its predecessors and applies the ReLU activation function $x\mapsto \max\{0,x\}$ on top. The full NN then computes a function mapping real-valued inputs to real-valued outputs. See \Cref{Sec:Prelim} for a formal definition. A simple example is given in \Cref{Fig:Min2Num}.

\begin{figure}[b]
	\centering
	\begin{minipage}[b]{.34\textwidth}
		\centering
		\includegraphics{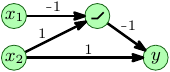}\vspace{0.5em}
		\captionsetup{singlelinecheck=off}
		\caption{A small NN with two input neurons $\bm x_1$ and $\bm x_2$, a single ReLU neuron labelled with the shape of the ReLU function, and one output neuron~$\bm y$. It computes the function% 
			\vspace{-0.7em}
			\begin{align*}
				\bm x &\mapsto \bm y\\&= \bm x_2-\max\set{0,\bm x_2-\bm x_1}\\&=-\max\set{-\bm x_2,-\bm x_1} \\&=\min\set{\bm x_1,\bm x_2}.
			\end{align*}\vspace{-2.2em}
		}
		\label{Fig:Min2Num}
	\end{minipage}\hfill
	\begin{minipage}[b]{.6\textwidth}
		\centering
		\includegraphics{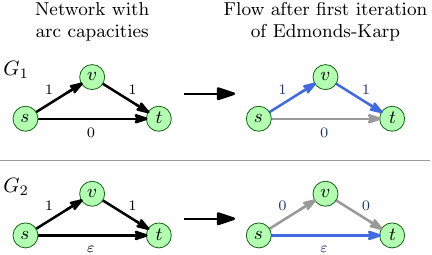}
		\caption{This example shows that the outcome of one iteration of the Edmonds-Karp algorithm for computing a maximum flow depends discontinuously on the arc capacities. Here, a small adjustment of the capacity of arc $st$ leads to a drastic change of the flow after the first iteration.} 
		\label{Fig:edmonds_karp}
	\end{minipage}
\end{figure}

The neurons are commonly organized in \emph{layers}. The \emph{depth}, \emph{width}, and \emph{size} of an NN are defined as the number of layers, the maximum number of neurons per layer, and the total number of neurons, respectively. An important theoretical question about these NNs is concerned with their expressivity: which functions can be represented by an NN of a certain depth, width, or size?

Neural network expressivity has been thoroughly investigated from an approximation point of view. For example, so-called \emph{universal approximation theorems}~\cite{anthony2009neural,cybenko1989approximation,hornik1991approximation} show that every continuous function on a bounded domain can be arbitrarily well approximated with only a single nonlinear layer. However, for a full theoretical understanding of this fundamental machine learning model it is necessary to understand what functions can be \emph{exactly} expressed with different NN architectures. For instance, insights about exact representability have boosted our understanding of the computational complexity of the task to train an NN with respect to both, algorithms~\cite{Arora:DNNwithReLU,khalife2022neural} and hardness results~\cite{GKMR21,froese2021computational,bertschinger2022training,froese2023training}. It is known that a function can be expressed with a ReLU~NN if and only if it is \emph{continuous and piecewise linear} (CPWL)~\cite{Arora:DNNwithReLU}. However, many surprisingly basic questions remain open. For example, it is not known whether two layers of ReLU units (with any width) are sufficient to compute the function \mbox{$f\colon\R^4\to\R$}, \mbox{$x\mapsto\max\{0,x_1,x_2,x_3,x_4\}$~\cite{hertrich2021towards,haase2023lower}}. 

In this paper we explore another fundamental question within the research stream of exact representability: what are families of CPWL functions that can be represented with ReLU~NNs of polynomial size? In other words, using NNs as a model of computation operating on \emph{real} numbers (in contrast to Turing machines or Boolean circuits, which operate on binary encodings), which problems do have polynomial complexity in this model?

Our motivation to study this model stems from a variety of different perspectives, including strongly polynomial time algorithms, arithmetic circuit complexity, parallel computation, and learning theory. 
We believe that classical combinatorial optimization problems are a natural example to study this model of computation because their algorithmic properties are well understood in each of these areas.

If there are polynomial-size NNs to solve a certain problem, and assuming that the weights of these NNs are computable in polynomial time, then there exists a strongly polynomial time algorithm for that problem, simply by executing the NN.\footnote{In circuit complexity language, such a family of neural networks with polynomial-time computable weights would be called a ``uniform'' family. To see that such a family provides a strongly polynomial-time algorithm in the bit model, observe that for any rational input the number of arithmetic operations is bounded by the size of the neural network and hence polynomial. Furthermore, the encoding size of all intermediate numbers in the computation is polynomial in the encoding size of the input because a ReLU network can only perform additions, scalar multiplications, and maxima computations.} However, the converse might be false. This is due to the fact that ReLU~NNs only allow a very limited set of possible operations, namely affine combinations and maxima computations. In particular, every function computed by such NNs is continuous, making it impossible to realize instructions like a simple \textbf{if}-branching based on a comparison of real numbers. In fact, there are related models of computation for which the use of branchings is exponentially powerful~\cite{jerrum1982some}.

For some CO problems, classical algorithms do not involve comparison-based branchings and, thus, can easily be implemented as an NN. This is, for example, true for many dynamic programs. In these cases, the existence of efficient NNs follows immediately. We refer to Hertrich and Skutella~\cite{knapsackPaper} for some examples of this kind. In particular, polynomial-size NNs to compute the length of a shortest path in a network from given arc lengths are possible.

For other problems, like the Minimum Spanning Tree Problem or the Maximum Flow Problem, all classical algorithms use comparison-based branchings.
For example, many maximum flow algorithms use them to decide whether an arc is part of the \emph{residual network}.
More specifically, in the Edmonds-Karp algorithm a slight perturbation (from $0$ to $\epsilon$) in the capacities can lead to different augmenting path and therefore to a completely different intermediate flow; see \Cref{Fig:edmonds_karp}.
Such a discontinuous behavior can never be represented by a ReLU~NN.

\subsection{Our Main Results}

In order to make it easier to think about NNs in an algorithmic way, we introduce the pseudo-code language \emph{Max-Affine Arithmetic Programs} (MAAPs). We show that MAAPs and NNs are equivalent (up to constant factors) concerning three basic complexity measures corresponding to depth, width, and overall size of NNs. Hence, MAAPs serve as a convenient tool for constructing NNs with bounded size and could be useful for further research about NN expressivity beyond the scope of this paper.

We use this result to prove our two main theorems. The first one shows that computing the value of a minimum spanning tree has polynomial complexity on NNs. The proof is based on a result from subtraction-free circuit complexity~\cite{fomin2016subtraction}.

\begin{restatable}{theorem}{thmMST}\label{Thm:MST}
	For a fixed graph with $n$ vertices, there exists an NN of depth $\mathcal{O}(n\log n)$, width $\mathcal{O}(n^2)$, and size $\mathcal{O}(n^3)$ that correctly maps a vector of edge weights to the value of a minimum spanning tree.
\end{restatable}

The second result shows that computing a maximum flow has polynomial complexity on NNs. Since all classical algorithms involve conditional branchings based on the comparison of real numbers, the proof involves the development of a new strongly polynomial maximum flow algorithm which avoids such branchings. While, in terms of standard running times, the algorithm is definitely not competitive with algorithms that exploit comparison-based branchings, it is of independent interest with respect to the structural understanding of flow problems.

\begin{restatable}{theorem}{thmMaxFlow}\label{Thm:MaxFlow}
	Let $G=(V,E)$ be a fixed directed graph with $s,t\in V$, $\abs{V}=n$, and $\abs{E}=m$. There exists an NN of depth and size $\mathcal{O}(m^2n^2)$ and width $\mathcal{O}(1)$ that correctly maps a vector of arc capacities to a vector of flow values in a maximum $s$-$t$-flow.
\end{restatable}

Let us point out that in case of minimum spanning trees, the NN computes only the objective value, while for maximum flows, the NN computes the actual solution. There is a structural reason for this difference: Due to their continuous nature, ReLU~NNs cannot compute a discrete solution vector, like an indicator vector of the optimal spanning tree, because infinitesimal changes of the edge weights would lead to jumps in the output. For the Maximum Flow Problem, however, the optimal flow itself does indeed have a continuous dependence on the arc capacities. This continuity issue could be fixed by allowing neurons with linear threshold activations in the output layer. However, since this significantly changes the considered neural network model, making it way more powerful, we do not consider such networks.

\subsection{Discussion of the Results}

Before presenting our result in more detail, we discuss the significance and limitations of our results from various perspectives.

\paragraph{Learning Theory}

A standard approach to create a machine learning model usually contains the following two steps. The first step is to fix a particular \emph{hypothesis class}. When using NNs, this means to fix an architecture, that is, the underlying graph of the NN. Then, each possible choice of weights and biases of all affine transformations in the network constitutes one hypothesis in the class. The second step is to run an optimization routine to find a hypothesis in the class that fits given training data as accurately as possible.

A core theme in learning theory is to analyze how the choice of the hypothesis class influences different kinds of errors made by the machine learning model. If the chosen hypothesis class is too small, then even the best hypothesis might not be good enough and the model incurs a large \emph{approximation error}. If the chosen hypothesis class is too large, then the model is likely to overfit on the training data resulting in a large \emph{generalization error} when applied to unseen data. Finding a good tradeoff between these different errors is the art of every machine learning practitioner.

Classical learning theory provides a rich toolbox for understanding the effect of a specific hypothesis class on the learning error using notions like PAC learnability, VC-dimension and Rademacher complexity~\cite{Shalev2014:UnderstandingML}. However, these theories struggle to explain the success of modern NN architectures, which are massively overparameterized. Yet they usually do not suffer from overfitting and the generalization error is much lower than expected~\cite{zhang2021understanding}.

While there exist many attempts to explain the mysterious success of modern NNs~\cite{berner2021modern}, there is still a long way ahead of us. Understanding what CPWL functions are actually contained in the hypothesis classes defined by NNs of a certain size (in particular, polynomial size) is a key insight in this direction. We see our combinatorial, exact perspective as a counterbalance and complement to the usual approximate point of view.

\paragraph{Strongly Polynomial Time Algorithms}
As pointed out above, polynomial-size NNs correspond to a subclass of strongly polynomial time algorithms with a very limited set of operations allowed. Given that this subclass stems from one of the most basic machine learning models, our grand vision, to which we contribute with our results, is to understand for different CO problems whether they admit strongly polynomial time algorithms of this type.

Algorithms of this type have not been known before for the two problems considered in this paper. It remains an open question whether such algorithms, and hence, polynomial-size NNs, exist to solve other CO problems for which strongly polynomial time algorithms are known. Can they, for instance, compute the weight of a minimum weight perfect matching in (bipartite) graphs? Can they compute the cost of a minimum cost flow from either node demands or arc costs, while the other of the two quantities is considered to be fixed?

A major open question is also to prove lower bounds on NN sizes. Can we find a family of CPWL functions (corresponding to a CO problem or not) that can be evaluated in strongly polynomial time, but \emph{not} computed by polynomial-size NNs? While proving lower bounds in complexity theory always seems to be a challenging task, we believe that not all hope is lost. For example, in the area of \emph{extended formulations}, it has been shown that there exist problems (in particular, minimum weight perfect matching) which can be solved in strongly polynomial time, but every linear programming formulation to this problem must have exponential size~\cite{rothvoss2017matching}.
Possibly, one can show in the same spirit that also polynomial-size NN representations are not achievable.

\paragraph{Parametric Algorithms} Our results also have an interesting interpretation from the perspective of parametric algorithms. There exists a variety of literature concerning the question how solutions to the Maximum Flow Problem can be represented if the input depends on one or several unknown parameters; see, e.g., the works by Gallo~et~al.~\cite{gallo1989fast} and McCormick~\cite{mccormick1999fast}. Our results imply that there exists such a representation of polynomial size for the most general form of parametric maximum flow problems, namely the one where \emph{all} arc capacities are independent free parameters. This representation is given in the form of a polynomial-size NN and can be evaluated in polynomial time.

\paragraph{Boolean Circuits}
Even though NNs are naturally a model of real computation, it is worth to have a look at their computational power with respect to Boolean inputs. Interestingly, this makes understanding the computational power of NNs much easier. It is easy to see that ReLU~NNs can directly simulate \mbox{AND-,} \mbox{OR-,} and NOT-gates, and thus every Boolean circuit~\cite{mukherjee2017lower}. Hence, in Boolean arithmetics, every problem in P can be solved with polynomial-size~NNs. 

However, requiring the networks to solve a problem for all possible real-valued inputs seems to be much stronger. Consequently, the class of functions representable with polynomial-size NNs is much less understood than in Boolean arithmetics. Our results suggest that rethinking and forbidding basic algorithmic paradigms (like comparison-based branchings) can help towards improving this understanding.

\paragraph{Arithmetic Circuits}

As a circuit model with real-valued computation, ReLU networks are naturally closely related to \emph{arithmetic circuits}.
Just like NNs, arithmetic circuits are computational graphs in which each node computes some arithmetic expression (traditionally addition or multiplication) from the outputs of all its predecessors. Arithmetic circuits are well-studied objects in complexity theory~\cite{shpilka2010arithmetic}. 
Closer to ReLU NNs, there is a special kind of arithmetic circuits called \emph{tropical circuits}~\cite{jukna2015lower}. In contrast to ordinary arithmetic circuits, they only contain maximum (or minimum) gates instead of sum gates and sum gates instead of product gates. Thus, they are arithmetic circuits in the max-plus algebra.

A tropical circuit can be simulated by an NN of roughly the same size since NNs can compute maxima and sums. Thus, NNs are at least as powerful as tropical circuits.
In fact, NNs are strictly more powerful. In particular, lower bounds on the size of tropical circuits do not apply to NNs. A particular example is the computation of the value of a minimum spanning tree. By Jukna and Seiwert~\cite{jukna2019greedy}, no polynomial-size tropical circuit can do this. However, \Cref{Thm:MST} shows that NNs of cubic size (in the number of nodes of the input graph) are sufficient for this task.

The reason for this exponential gap is that, by using negative weights, NNs can realize subtractions (that is, tropical division), which is not possible with tropical circuits; compare the discussion by Jukna and Seiwert~\cite{jukna2019greedy}. However, this is not the only feature that makes NNs more powerful than tropical circuits. In addition, NNs can realize scalar multiplication (tropical exponentiation) with arbitrary real numbers via their weights, which is impossible with tropical circuits. It is unclear to what extent this feature increases the computational power of NNs compared to tropical circuits.

For these reasons, lower bounds from arithmetic circuit complexity do not transfer to NNs. Hence, we identify it as a major challenge to prove meaningful lower bounds of any kind for the computational model of NNs.

\paragraph{Parallel Computation}

Similar to Boolean circuits, NNs are naturally a model of parallel computation by performing all operations within one layer at the same time. Without going into detail here, the depth of an NN is related to the running time of a parallel algorithm, its width is related to the required number of processing units, and its size to the total amount of work conducted by the algorithm. Against this background, a natural goal is to design NNs as shallow as possible in order to make maximal use of parallelization. However, several results in the area of NN expressivity state that decreasing the depth is often only possible at the cost of an \emph{exponential} increase in width; see~\cite{Arora:DNNwithReLU,eldan2016power,liang2017deep,safran2017depth,Telgarsky15,telgarsky2016benefits,yarotsky2017error}.

Interestingly, a related observation can be made for the Maximum Flow Problem using complexity theory. 
By the result of Arora et al.\ \cite{Arora:DNNwithReLU} mentioned above, any CPWL function can be represented with logarithmic depth in the input dimension, while not giving any guarantee on the total size. In particular, this is also true for the function mapping arc capacities to a maximum flow. Hence, NNs with logarithmic depth can solve the Maximum Flow Problem and it arises the question whether such shallow NNs are also possible while maintaining polynomial total size.

The answer is most likely ``no'' since the Maximum Flow Problem is \mbox{\emph{P-complete}}~\cite{goldschlager1982}. P-complete problems are those problems in P that are \emph{inherently sequential}, meaning that there cannot exist a parallel algorithm with polylogarithmic running time using a polynomial number of processors unless the complexity classes P and NC coincide, which is conjectured to be not the case~\cite{greenlaw1995limits}. NNs with polylogarithmic depth and polynomial total size that solve the Maximum Flow Problem, however, would translate to such an algorithm (under mild additional conditions, such as, that the weights of the NN can be computed in polynomial time).
Therefore, we conclude that it is unlikely to obtain NNs for the Maximum Flow Problem that make significant use of parallelization.
In other words, \Cref{Thm:MaxFlow} can probably not be improved to neural networks with polylogarithmic depth while maintaining overall polynomial size.

\subsection{Further Related Work}

Using NNs to solve CO problems started with so-called \emph{Hopfield networks}~\cite{hopfield1985neural} and related architectures in the 1980s and has been extended to general nonlinear programming problems later on~\cite{kennedy1988neural}. Smith~\cite{Smith:NNforCOreview} surveys these early approaches. Also, specific NNs to solve the Maximum Flow Problem have been developed before~\cite{ali1991neural,effati2008neural,nazemi2012}.
Hopfield NNs are special versions of
recurrent neural networks (RNNs) that find solutions to optimization problems by converging towards a minimum
of an energy function. As such, they are conceptually very different from modern feedforward NNs representing a fixed input-output mapping, as we consider them in this paper.

In recent years interactions between NNs and CO have regained a lot of attention in the literature~\cite{BengioLodiProuvost:MLforCO}, for example, for boosting MIP solvers~\cite{lodi2017learning} and solving specific CO problems~\cite{Bello:NeuralCOwithRL,Emami:learningPermutations,Khalil:LearningCOoverGraphs,kool2019attention,nowak:quadrAssignment,Vinyals:PointerNetworks}. These approaches usually are of heuristic nature without quality or running time guarantees.

Concerning the expressivity of ReLU neural networks, various trade-offs between depth and width of NNs~\cite{Arora:DNNwithReLU,eldan2016power,hanin2019universal,hanin2017approximating,liang2017deep,nguyen2018neural,raghu2017expressive,safran2017depth,Telgarsky15,telgarsky2016benefits,yarotsky2017error} and approaches to count and bound the number of linear regions of a ReLU NN~\cite{hanin2019complexity,montufar2014regions,pascanu2014number,raghu2017expressive,serra2018bounding} have been found.
NNs have been studied from a circuit complexity point of view before~\cite{beiu1996circuit,parberry1994circuit,shawe1992classes}. However, these works focus on Boolean circuit complexity of NNs with sigmoid or threshold activation functions. We are not aware of previous work investigating the computational power of ReLU~NNs as arithmetic circuits operating on the real numbers.

For an introduction to classical minimum spanning tree and maximum flow algorithms, we refer to textbooks~\mbox{\cite{AhujaMagnantiOrlin:NetworkFlows,kortevygen,williamson_2019}}. The asymptotically fastest known combinatorial maximum flow algorithm due to Orlin~\cite{orlin2013} runs in $\mathcal{O}(nm)$ time for~$n$ nodes and~$m$ arcs. Recently, almost linear, weakly polynomial algorithms based on interior point methods have been developed~\cite{chen2022maximum}. However, polynomial-size NNs necessarily correspond to strongly polynomial algorithms.

\section{Algorithms and Proof Overview}

In this section we provide an intuitive overview of how we prove our results. The details of the proofs will be presented in \Cref{Sec:MAAP,Sec:CO,Sec:MaxFlow}.

\paragraph{Max-Affine Arithmetic Programs} For the purpose of algorithmic investigations of ReLU~NNs, we introduce the pseudo-code language \emph{Max-Affine Arithmetic Programs} (MAAPs). A MAAP operates on real-valued variables. The only operations allowed in a MAAP are computing maxima and affine transformations of variables as well as parallel and sequential {\bf for} loops with a \emph{fixed}\footnote{In this context, \emph{fixed} means that the number of iterations cannot depend on the specific instance. It can still depend on the size of the instance (e.g., the size of the graph in case of the two CO problems considered in this paper).} number of iterations. In particular, no \textbf{if} branchings are allowed. With a MAAP~$A$, we associate three complexity measures $d(A)$, $w(A)$, and $s(A)$, which can easily be calculated from a MAAP's description. Intuitively, $d(A)$ is related to the \emph{parallel} computation time required to execute the MAAP, $w(A)$ to the number of processors required, and $s(A)$ to the total work performed by the MAAP. However, we calibrate (and name) these measures such that they formally correspond (up to constant factors) to the depth, width, and size of an NN computing the same function as the MAAP does. We formalize this by proving the following proposition, which is similar to the transformation of circuits into \emph{straight-line programs} in Boolean or arithmetic circuit complexity. 

\begin{restatable}{proposition}{NNMAAP}\label{Prop:NN-MAAP}
	For a function $f\colon\R^n\to\R^m$ the following is true.
	\begin{enumerate}[(i)]
		\item If $f$ can be computed by a MAAP $A$, then it can also be computed by an NN with depth at most $d(A)+1$, width at most $w(A)$, and size at most $s(A)$.
		\item If $f$ can be computed by an NN with depth $d+1$, width $w$, and size $s$, then it can also be computed by a MAAP $A$ with $d(A)\leq d$, $w(A)\leq 2w$, and $s(A)\leq 4s$.
	\end{enumerate}
\end{restatable}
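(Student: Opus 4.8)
The plan is to prove both inclusions by exploiting the common recursive structure shared by the MAAP complexity measures and the layered structure of NNs. For part~(i) I would argue by structural induction on the instruction tree of the MAAP~$A$. The base case is the assignment. An affine assignment $b + \sum_j c_j \bm v_j$ comes ``for free'': since arcs may carry arbitrary weights and biases and are permitted to skip layers, any affine combination of already-computed quantities can be folded into the affine map feeding a later layer, contributing nothing to $d(A)$, $w(A)$, or $s(A)$. For a maximum of $k \ge 2$ affine terms I would realize the computation by a balanced binary tree of pairwise-maximum gadgets, each built from the identity
\[
\max\{a,b\} = \tfrac12\bigl(a + b + \sigma(a-b) + \sigma(b-a)\bigr)
\]
(cf.\ the min-gadget in \Cref{Fig:Min2Num}), using a constant number of ReLU neurons per pair. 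A tree over $k$ leaves has depth $\lceil \log_2 k\rceil$, matching the measure, and the width and size bounds $2k$ and $4k$ follow by summing the per-gadget neuron counts over the tree.

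For the inductive step I would handle each composite instruction according to its defining recursion. A \emph{sequence} of blocks is realized by concatenating the sub-NNs, identifying the output layer of one with (part of) the input layer of the next and merging the adjacent affine maps into a single inter-layer transformation; this makes depths add, widths take the maximum, and sizes add. A \textbf{Do-Parallel} or \textbf{For-Do-Parallel} instruction is realized by placing the sub-NNs side by side over a shared input layer, where the disjointness of assigned variables enforced by the parallelism constraint guarantees no interference; hence depths take the maximum while widths and sizes add. A \textbf{For-Do} loop with a constant number of iterations is simply unrolled into a sequence. In each case the combination rule reproduces exactly the recursive definition of $d$, $w$, $s$. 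The additive ``$+1$'' in the depth accounts for the final affine transformation into the output layer, on which no ReLU is applied.

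For part~(ii) I would read the given NN layer by layer into a MAAP. For each hidden layer $\ell$ I first compute the activations $\bm a(v) = b_v + \sum_{u} w_{uv}\,\bm o(u)$, which are affine and therefore free, and then apply the activation function to the whole layer in parallel via a block of assignments $\bm o(v) \leftarrow \max\{0, \bm a(v)\}$, one per neuron; the output layer is produced by a final free affine assignment. Composing the per-layer blocks sequentially yields depth $d$, and the parallel-within-layer, sequential-across-layers structure translates the NN width and size into the MAAP measures $w(A) = 2w$ and $s(A) = 4s$, the constant factors reflecting the cost charged to a single ReLU gadget.

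The main obstacle is not any individual gadget but the careful bookkeeping needed to make the recursive measures line up \emph{exactly} with the NN quantities, and in particular to obtain the precise constants in the statement rather than looser ones. Concretely, I would need to be precise about (a)~how adjacent ``free'' affine maps at block boundaries can always be fused into one inter-layer transformation without incurring extra depth, (b)~tracking which neuron currently holds the value of each live variable and using skip arcs so that quantities needed in later blocks persist without inserting identity neurons, and (c)~the asymmetry caused by output neurons omitting the activation function, which both forces the ``$+1$'' in the depth bound and must be handled consistently whenever sub-NNs are concatenated.
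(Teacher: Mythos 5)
Your proposal is correct and follows essentially the same route as the paper: part~(i) by structural induction on the instruction tree (unrolling \textbf{For-Do} loops, concatenating sub-NNs for sequences, placing them side by side for parallel blocks, and realizing $k$-term maxima by a logarithmic-depth tree of pairwise-max gadgets, which is exactly the construction the paper imports from Arora et al.), and part~(ii) by the same generic layer-by-layer simulation of the NN as a MAAP, with free affine assignments for activations and parallel $\max\{0,\cdot\}$ assignments for the ReLUs. The bookkeeping issues you flag (fusing affine maps at block boundaries, skip arcs for live variables, the missing activation at the output layer) are precisely the details the paper leaves implicit, so nothing is missing relative to its proof.
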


The proof of the proposition works by providing explicit constructions to convert a MAAP into an NN (part (i)), and vice versa (part (ii)) while taking care that the different complexity measures translate respectively.

The takeaway from this exercise is that for proving that NNs of a certain size can compute certain functions, it is sufficient to develop an algorithm in the form of a MAAP that computes the same function and to bound its complexity measures $d(A)$, $w(A)$, and $s(A)$.

\paragraph{Minimum Spanning Trees}

A spanning tree in an undirected graph is a set of edges that is connected, spans all vertices, and does not contain any cycle. For given edge weights, the Minimum Spanning Tree Problem is to find a spanning tree with the least possible total edge weight.

Classical algorithms for the Minimum Spanning Tree Problem, for example Kruskal's or Prim's algorithm, use comparison-based branchings to determine the order in which edges are potentially added to the solution. Thus, they cannot be written as a MAAP or implemented as an NN. Instead, \Cref{Thm:MST} can be shown by ``tropicalizing'' a result by Fomin et al.~\cite{fomin2016subtraction} from arithmetic circuit complexity.

To be more precise, Fomin et al.~\cite{fomin2016subtraction} provide a construction of a polynomial-size \emph{subtraction-free arithmetic circuit} (with standard addition, multiplication, and division, but without subtractions) to compute the so-called \emph{spanning tree polynomial} of a graph $(V,E)$. If $\mathcal{T}$ denotes the set of all spanning trees, then this polynomial is
$
\sum_{T\in\mathcal{T}}\ \prod_{e\in T} x_e
$
defined over $\abs{E}$ many variables $x_e$ associated with the edges of the graph.
Tropicalizing this polynomial (to min-plus algebra) results precisely in the tropical polynomial mapping edge weights to the value of a minimum spanning tree:
$
\min_{T\in\mathcal{T}}\ \sum_{e\in T} x_e
$.

In the same way, one can tropicalize the arithmetic circuit provided by Fomin et al.~\cite{fomin2016subtraction}. In fact, every sum gate is just replaced with the small NN from \Cref{Fig:Min2Num} computing the minimum of its inputs, every product with a summation, and every division with a subtraction (realized using negative weights). That way, we obtain a polynomial-size NN to compute the value of a minimum spanning tree from any given edge weights. Note that it is crucial that the circuit is \emph{subtraction-free} because there is no inverse with respect to tropical addition.

While this tropicalization is already sufficient to justify the existence of polynomial-size NNs to compute the value of a minimum spanning tree, to unveil the algorithmic ideas behind this construction, we provide an equivalent, completely combinatorial proof of \Cref{Thm:MST}, making use of MAAPs and \Cref{Prop:NN-MAAP}.

Without loss of generality, we restrict ourselves to complete graphs. Edges missing in the actual input graph can be represented with large weights such that they will never be included in a minimum spanning tree.
For $n=2$ vertices, the MAAP simply returns the weight of the only edge of the graph. For $n\geq 3$, our MAAP is given in \Cref{Alg:MST}.

\begin{algorithm2e}[tb]

	\SetKwFor{ForParallel}{for each}{do parallel}{end}
	\SetKwInput{KwInput}{Input}
	
	\caption{$\MST_n$: Compute the value of a minimum spanning tree for the complete graph on $n\geq3$ vertices.} \label{Alg:MST}
	
	\KwInput{Edge weights $(\bm x_{ij})_{1\leq i < j \leq n}$.}\vspace{0.5em}
	
	$\bm y_n \leftarrow \min_{i\in[n-1]} \bm x_{in}$\label{Line:compy}
	
	\ForParallel{$1\leq i<j\leq n-1$}{
	
		$\bm x'_{ij} \leftarrow \min\set{\bm x_{ij},\ \bm x_{in} + \bm x_{jn} - \bm y_n}$ \label{Line:xprime}
		
	} \vspace{0.5em}
	
	\Return{$\bm y_n + \MST_{n-1}\left((\bm x'_{ij})_{1\leq i < j \leq n-1}\right)$}
	
\end{algorithm2e}

Let us mention that the use of recursions is just a technicality because for each fixed $n$, the recursion can be unrolled and the MAAP can be stated explicitly. In each step, one node of the graph is deleted and all remaining edge weights are updated in such a way that the objective value of the minimum spanning tree problem in the original graph can be calculated from the objective value in the smaller graph. This idea of removing the vertices one by one can be seen as the translation of the so-called \emph{star-mesh transformation} used by Fomin~et~al.~\cite{fomin2016subtraction} into the combinatorial world.

We prove \Cref{Thm:MST} in \Cref{Sec:CO} by, firstly, showing that \Cref{Alg:MST} indeed computes the correct objective value, and secondly, bounding its complexity measures $d(A)$, $w(A)$, and $s(A)$ and applying \Cref{Prop:NN-MAAP}.

\paragraph{Maximum Flows}

For a given directed graph with a source node $s$, a sink node~$t$, and nonnegative capacities on each arc, the Maximum Flow Problem asks to find a flow value for each arc such that no capacity is exceeded, the inflow equals the outflow at each node except for $s$ and~$t$, and the outflow at $s$ (or equivalently the inflow at~$t$) is maximized.

Since classical maximum flow algorithms rely on conditional branchings based on the comparison of real numbers (for instance, to check which arcs are contained in the residual network), we develop a new maximum flow algorithm in the form of a MAAP (see \Cref{Alg:max_flow,Alg:augmenting_flow}), which then translates to an NN of the claimed size by \Cref{Prop:NN-MAAP}. In the description of the algorithm, we assume without loss of generality that for each arc $e=uv\in E$ also its reverse arc~$vu$ is contained in $E$ and let $\vec E$ denote a subset of all arcs containing exactly one arc for each pair of antiparallel arcs. To point out the ability of neural networks to parallelize well, we sometimes use parallel loops even though this does not significantly reduce asymptotic complexity measures in our case.

To explain our algorithm, let us start by recalling the key ideas of the classical Edmonds-Karp-Dinic algorithm~\cite{edmonds1972theoretical,dinic1970algorithm}. The algorithm repeatedly finds a shortest \mbox{$s$-$t$}~path in the residual graph $G^*=(V,E^*)$, and sends the maximum possible amount of flow on such a path, that is, saturates at least one arc. The algorithm terminates by returning a minimum cut once $t$ cannot be reached from $s$ in the residual graph. The key insight in the analysis is that the distance from $s$ to $t$ in the residual graph is non-decreasing, and strictly increases within at most $m$ such iterations. Thus, the number of iterations can be bounded by $\mathcal{O}(nm)$.

A shortest path can be characterized by \emph{distance labels}. The vector $d\in\mathbb{R}^V_+$ is a distance labelling if $d(s)=0$ and $d(v)\le d(u)+1$ for every residual arc $uv\in E^*$. If there exists an $s$-$t$ path $P$ such that $d(v)=d(u)+1$ for every arc in~$P$, then~$P$ is a shortest path. Identifying a shortest path is equivalent to finding distance labels and such a path. We note that the preflow-push algorithm~\cite{goldberg1988new} explicitly relies on using distance labels and pushing flow on residual arcs $uv$ with $d(v)=d(u)+1$.
However, finding such a labelling requires \textbf{if}-branchings as it needs to identify the arcs in $E^*$, that is, arcs with positive residual capacity.

\begin{algorithm2e}[tb]
	%\SetKwFunction{MaxFlow}{MaxFlow}
	\SetKwFunction{FindAugmentingFlow}{FindAugmentingFlow$_k$}
	\SetKwFor{ForParallel}{for each}{do parallel}{end}
	\SetKwInput{KwInput}{Input}
	
	\caption{Compute a maximum flow for a fixed graph~\mbox{$G=(V,E)$}.} \label{Alg:max_flow}
	
	\KwInput{Capacities $(\bm \nu_e)_{e \in E}$.}\vspace{0.5em}
	
	\tcp{Initializing:}
	
	\ForParallel{$uv\in \vec E$}{\label{Line:startIni}
	
		$\bm x_{uv} \leftarrow 0$ %\textbf{ and} \hspace{9.5pt}%
		\hspace{9.5pt}\tcp{flow; negative value corresponds to flow on $vu$} \label{Line:ini1}
		
		$\bm c_{uv} \leftarrow \bm \nu_{uv}$ %\textbf{ and}%
		\tcp{residual forward capacities}
		
		$\bm c_{vu} \leftarrow \bm \nu_{vu}$ %\hspace{22pt} 
		\tcp{residual backward capacities}\label{Line:endIni}
		
	} \vspace{0.5em}           
	
	\tcp{Main part:} 
	
	\For{$k = 1, \dots, n-1$}{\label{Line:outerFor}
	
		\For{$i = 1, \dots, m$}{\label{Line:innerFor}
		
			$(\bm y_e)_{e \in \vec E} \leftarrow \FindAugmentingFlow((\bm c_e)_{e \in E})$ \label{Line:sub}
			
			\tcc{Returns an augmenting flow (respecting the residual capacities) that only uses paths of length exactly $k$ and saturates at least one arc.}
			
			\tcp{Augmenting:}
			
			\ForParallel{$uv \in \vec E$}{\label{Line:startUpdate}
			
				$\bm x_{uv} \leftarrow \bm x_{uv} + \bm y_{uv}$% \textbf{ and} 
				\label{Line:beginUpdate}
				
				$\bm c_{uv} \leftarrow \bm c_{uv} - \bm y_{uv}$% \textbf{ and} 
				
				$\bm c_{vu} \leftarrow \bm c_{vu} + \bm y_{uv}$\label{Line:endUpdate}
			}
		}
	}
	\Return{$(\bm x_{e})_{e \in \vec E}$}
\end{algorithm2e}

At a high level, our algorithm is similar, but it avoids knowing the arcs in the residual graph and the length $k$ of the shortest residual $s$-$t$ path explicitly. Instead, we guess $k$ in each iteration of the main procedure (\Cref{Alg:max_flow}), making sure that we never overestimate the true length. The guess is initialized as~\mbox{$k=1$} and, in accordance with the Edmonds-Karp-Dinic analysis, we increment~$k$ by one in every $m$ iterations. Based on our guess for $k$, we use a subroutine \texttt{FindAugmentingFlow}$_k$ (\Cref{Alg:augmenting_flow}) with the following feature: if the actual shortest path length is exactly $k$,
the subroutine will send flow from $s$ to $t$ on (possibly multiple) paths of length exactly $k$, saturating at least one arc. If the  shortest path is longer than~$k$, nothing happens in the current iteration.

\begin{algorithm2e}[tp]
	\small
	\SetKwInput{KwInput}{Input}
	\SetKwFor{ForParallel}{for each}{do parallel}{end}
	
	\caption{FindAugmentingFlow$_k$ for a fixed graph $G=(V,E)$ and a fixed length $k$.} \label{Alg:augmenting_flow}
	
	\KwInput{Residual capacities $(\bm c_e)_{e \in E}$.}
	\vspace{0.5em}
	
	\tcp{Initializing:}
	
	\ForParallel{$vw \in \vec E$}{\label{line:start}
	
		$\bm z_{vw} \leftarrow 0$ % \textbf{ and}  
		\quad \tcp{flow in residual network}
		
		$\bm z_{wv} \leftarrow 0$
	}
	\ForParallel{$(i, v) \in [k] \times  (V \setminus \set{t})$}{
	
		$\bm Y_v^i \leftarrow 0$ % \textbf{ and} \quad 
		\hspace{3pt}\tcp{excessive flow at $v$ in iteration $i$ (from $k$ to $1$)}
		
		$\bm a_{i,v}\leftarrow 0$ \label{line:init_fattest}\tcp{initialize fattest path values}
	} 
	\vspace{0.5em}
	
	\tcp{Determining the fattest path values:} 
	
	\ForParallel{$v\in N^-_t$}{ \label{line:begin_fattest_path}
	
		$\bm a_{1, v} \leftarrow \bm c_{vt}$\label{line:fattest_t}
	}
	
	\For{$i = 2, 3, \dots, k$}{\label{line:begin_fattest_for}
	
		\ForParallel{$v \in V \setminus\set{t}$}{
		
			$\bm a_{i, v} \leftarrow \max_{w \in N^+_v \setminus\set{t}}\min\set{\bm a_{i-1,w}, \bm c_{vw}}$ \label{line:end_fattest_path}
		}            
	}
	\vspace{0.5em}
	
	\tcp{Pushing flow of value $\bm a_{k, s}$ from $s$ to $t$:}
	
	$\bm Y_s^k \leftarrow \bm a_{k, s}$ \tcp{excessive flow at $s$} \label{line:begin_pushing}
	
	\For{$i = k, k-1, \dots, 2$}{\label{line:outer_for}
	
		\For{$v \in V\setminus\set{t}$ in index order}{
		
			\For{$w\in N^+_v\setminus\set{t}$ in index order}{
			
				\tcp{Push flow out of $v$ and into $w$:}
				
				$\bm f \leftarrow \min\set{\bm Y_v^i, \bm c_{vw}, \bm a_{i-1, w} - \bm Y_w^{i-1}}$ \tcp{value we can push over $vw$ such that this flow can still arrive at $t$}\label{line:find_push_amount}
				
				$\bm z_{vw} \leftarrow \bm z_{vw} + \bm f$ %\textbf{ and}
				\label{line:increase_of_z}
				
				$\bm Y_v^i \leftarrow \bm Y_v^i - \bm f$ %\textbf{ and}
				
				$\bm Y_w^{i-1} \leftarrow \bm Y_w^{i-1} + \bm f\label{line:increase_of_Y}$
			}
		}
	}
	\ForParallel{$v\in N^-_t$}{\label{line:start_t_pushing}
	
		\tcp{Push flow out of $v$ and into $t$:}
		
		$\bm z_{vt} \leftarrow \bm Y_v^1$
		
		$\bm Y_v^1 \leftarrow 0$ 
	} \label{line:end_pushing}
	\vspace{0.5em}
	
	\tcp{Clean-up by bounding:}
	
	\For{$i = 2, 3, \dots, k - 1$}{ \label{line:begin_clean_up}
	
		\For{$w \in V \setminus \set{t}$ in reverse index order}{    
		
			\For{$v \in N^-_w \setminus \set{t}$ in reverse index order}{
			
				$\bm b \leftarrow \min\set{\bm Y_w^i, \bm z_{vw}}$ \tcp{value we can push backwards along $vw$}
				
				$\bm z_{vw} \leftarrow \bm z_{vw} - \bm b$ %\textbf{ and}
				
				$\bm Y_w^i \leftarrow \bm Y_w^i - \bm b$ %\textbf{ and}
				
				$\bm Y_v^{i+1} \leftarrow \bm Y_v^{i+1} + \bm b$    \label{line:increase_of_Y_by_b} \label{line:end_clean_up}       
			}
		}
	}
	\vspace{0.5em}
	
	\ForParallel{$uv \in \vec E$}{\label{line:start_y}
	
		$\bm y_{vw} \leftarrow \bm z_{vw} - \bm z_{wv}$ \label{line:end}
	}
	
	\Return{$(\bm y_e)_{e \in \vec E}$}
\end{algorithm2e}

Instead of distance labels, the subroutine computes \emph{fattest path values} $\bm a_{i,v}$ (\cref{line:begin_fattest_path} to \ref{line:end_fattest_path}) that represent the maximum amount of flow that can be sent from $v$ to $t$ on a path of length exactly $i$. Such values can be obtained by a simple dynamic program that is easy to implement as a MAAP. Thus, a path $(s=v_k,v_{k-1},\dots,v_{1},v_0=t)$ of length exactly $k$ is contained in the residual network if and only if $\bm a_{i,v_i} > 0$ for all $i=1,\dots,k$. Our algorithm makes sure that we only send flow along arcs that are contained in such paths. In particular, the current iteration will send positive flow if and only if $\bm a_{k,s}>0$. However, we cannot recover the shortest $s$-$t$ path with capacity $\bm a_{k,s}$. Therefore, in general, flow will not be sent along a single path and the value of the flow output by \texttt{FindAugmentingFlow}$_k$ might be strictly less than $\bm a_{k,s}$.

After computing the $\bm a_{i,v}$ values, \texttt{FindAugmentingFlow}$_k$ greedily pushes flow from $s$ towards $t$, using a lexicographic selection rule to pick the next arc to push flow on (\cref{line:begin_pushing} to \ref{line:end_pushing}). On the high level, this is similar to the preflow-push algorithm, but using the $\bm a_{i,v}$ values that encode the shortest path distance information implicitly. This may leave some nodes with excess flow; 
a final cleanup phase (\cref{line:begin_clean_up} to \ref{line:end_clean_up}) is needed to send the remaining flow back to the source~$s$. 

An example for the \texttt{FindAugmentingFlow}$_k$-subroutine is given in \Cref{fig:find_augmenting_flow}.
We emphasize again that, although the description of the subroutine in the example in \Cref{fig:find_augmenting_flow} seems to rely heavily on the distance of a node to $t$, this information is calculated and used only in an implicit way via the precomputed $\bm a_{i,v}$ values. This way, we are able to implement the subroutine without the usage of comparison-based branchings.

\begin{figure}[tp]
	\centering
	\includegraphics[page=1,width=0.5\linewidth]{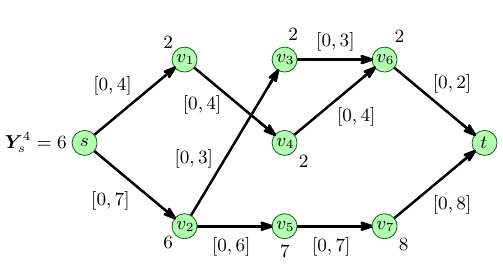}\\[.7em]
	\includegraphics[page=2,width=0.45\linewidth]{find_augmenting_flow}\hfill
	\includegraphics[page=3,width=0.45\linewidth]{find_augmenting_flow}\\[.7em]
	\includegraphics[page=4,width=0.45\linewidth]{find_augmenting_flow}\hfill
	\includegraphics[page=5,width=0.45\linewidth]{find_augmenting_flow}\\[.7em]
	\includegraphics[page=6,width=0.45\linewidth]{find_augmenting_flow}\hfill
	\includegraphics[page=7,width=0.45\linewidth]{find_augmenting_flow}
	\caption{Example of the \texttt{FindAugmentingFlow}$_k$ subroutine for $k=4$. The edge labels in the top figure are the residual capacity bounds in the current iteration. The first step is to compute the fattest path values $\bm a_{i,v}$, which are depicted as node labels in the top figure. The values $\bm Y_v^i$ always denote the excessive flow of a vertex $v$ with distance $i$ from the sink. All values that are not displayed are zero. At $s$, we initialize $\bm Y_s^4=\bm a_{4,s}=6$. Then, excessive flow is pushed greedily towards the sink, as shown in the four figures in the middle. While doing so, we ensure that at each vertex the arriving flow does not exceed its value  $\bm a_{i,v}$. For this reason, flow can get stuck, as it happens at $v_4$ in this example. Therefore, in a final cleanup phase, depicted in the two bottom figures, we push flow back to the source $s$. Observe that the result is an $s$-$t$-flow that is feasible with respect to the residual capacities, uses only paths of length $k=4$, and saturates the arc~$v_6t$.}
	\label{fig:find_augmenting_flow}
\end{figure}

The proof of correctness for our algorithm consists of two main steps. The first step is the analysis of the %\texttt{FindAugmentingFlow}$_k$ 
subroutine. This involves carefully showing that the returned flow indeed satisfies flow conservation, is feasible with respect to the residual capacities, uses only arcs that lie on a $s$-$t$-path of length exactly $k$ in the residual network, and most importantly, if such a path exists, it saturates at least one arc.
This last property can be shown using the lexicographic selection rule to pick the next arc to push flow on.
Note that, in general, the subroutine neither returns a single path (as in the Edmonds-Karp algorithm~\cite{edmonds1972theoretical}), nor a blocking flow (as in the Dinic algorithm~\cite{dinic1970algorithm}).
The second main step is to show that, nevertheless, the properties of the subroutine are sufficient to ensure that the distance from $s$ to $t$ in the residual network increases at least every~$m$ iterations, such that we terminate with a maximum flow after $nm$ iterations.

With the correctness of the whole MAAP at hand, \Cref{Thm:MaxFlow} follows by simply counting the complexity measures $d(A)$, $w(A)$, and $s(A)$, and applying \Cref{Prop:NN-MAAP}.

\section{Max-Affine Arithmetic Programs}\label{Sec:MAAP}

One way of specifying an NN is by explicitly writing down the network architecture, weights, and biases, that is, the affine transformations of each layer. However, for NNs that mimic the execution of an algorithm this is very cumbersome and not easy to read. For the purpose of an easier handling of such NNs, we introduce a pseudo-code language, called \emph{Max-Affine Arithmetic Programs (MAAPs)}, and prove that it is essentially equivalent to NNs. To prove this equivalence later in this section, we first provide a formal definition of neural networks.

\subsection{Formal Definition of Neural Networks}
\label{Sec:Prelim}
In this section, we formally define the primary object of study in this paper, using notations similar to \cite[Chapter 20]{Shalev2014:UnderstandingML}. A \emph{feedforward neural network with rectified linear units}, abbreviated by ReLU NN or simply NN, is a directed acyclic graph~$(V,E)$, for which each arc $e \in E$ is equipped with a \emph{weight} $w_e\in\R$ and each
node~$v \in V\setminus V_0$ is equipped with a \emph{bias} $b_v\in\R$. Here, $V_0$ denotes the set of all nodes with no incoming arcs. The nodes~$V$ of an NN are
called \emph{neurons} and the \emph{depth}~$k$ is given by the length of a longest path. The
neurons are partitioned into \emph{layers} $V=V_0\cupdot V_1 \cupdot \cdots \cupdot V_k$ in such a way that the layer
index strictly increases along each arc.\footnote{In other literature arcs are only allowed between successive layers.
	Clearly, this can always be achieved by introducing additional neurons. For our purposes, however, we want to avoid
	this restriction.} In addition, we assume that~$V_k$ is exactly the set of neurons with out-degree zero. The neurons in
$V_0$ and $V_k$ are called \emph{input neurons} and \emph{output neurons}, respectively.
All other neurons in $V\setminus(V_0\cup V_k)$ are so-called \emph{hidden neurons}. Let~$n_\l\coloneqq \abs{V_\l}$
be the number of neurons in layer $\l$. The \emph{width} and \emph{size} of the NN are given by
$\max\set{n_1,\dots,n_{k-1}}$ and~$\sum_{\l=1}^{k-1} n_\l$, respectively.

In our paper it is crucial to distinguish fixed parameters of NNs, like architectural details and weights, from input, activation, and output values of neurons. We denote the latter by bold symbols in order to make the difference visible.

The \emph{forward pass} of an NN is given by a function $\R^{n_0}\to \R^{n_k}$ that is obtained as follows. For an input vector
$\bm x\in\R^{n_0}$ we inductively compute an \emph{activation} $\bm a(v)$ for every~\mbox{$v\in V\setminus V_0$} and an \emph{output} $\bm o(v)$
for every \mbox{$v\in V\setminus V_k$}. First, the output values $\bm o(v)$ of the input neurons are set to the corresponding entries of the input vector $\bm x$. Second, the activation of a neuron~$v\in
V\setminus V_0$ is given by the weighted sum of the outputs of all of its predecessors plus the bias $b_v$. Formally, we set $\bm a(v)\coloneqq b_v +
\sum_{u\colon uv\in E} w_{uv} \bm o(u)$. Third, we apply the so-called \emph{activation function} $\sigma$ to obtain the output of all hidden neurons, i.e., $\bm o(v)\coloneqq\sigma(\bm a(v))$. In this work we only consider the \emph{ReLU function} $\sigma(z)=\max\set{0,z}$ as activation function. Finally, the activation values $\bm a(v)$ of the output neurons $v\in V_k$ provide the \emph{output vector} $\bm y\in \R^{n_k}$. Note that the activation function is not applied to these last activation values.

We point the reader once again to the example in \Cref{Fig:Min2Num}. Note that by our definitions the NN in the figure has depth $2$, width $1$, and size~$1$.

\subsection{Definition of Max-Affine Arithmetic Programs}

MAAPs are a model of real-valued computation, compare the discussion at the end of this section. As such, they perform arithmetic operations on real-valued \emph{variables}. In addition, there are natural- or real-valued \emph{constants} and different kinds of \emph{instructions}. In order to distinguish constants from variables the latter will be denoted by bold symbols. Each MAAP consists of a fixed number of input and output variables, as well as a sequence of (possibly nested) instructions.

In order to describe an algorithm with an arbitrary number of input variables and to be able to measure asymptotic complexity, we specify a \emph{family} of MAAPs that is parametrized by a natural number that determines the number of input variables and is treated like a constant in each single MAAP of the family. A MAAP family then corresponds to a family of NNs. A similar concept is known from circuit complexity, where Boolean circuit families are used to measure complexity; compare~\cite{arora2009computational}.

MAAPs consist of the following types of instructions:

\begin{enumerate}
	\item Assignment: this instruction assigns an expression to an old or new variable. The only two types of allowed expressions are affine combinations or maxima of affine combinations of variables:
	$b + \sum_{j} c_j\bm v_j$ and $\max \left\{\,b^{(i)} + \sum_{j} c_j^{(i)} \bm v_j^{(i)} \mathrel{}\middle|\mathrel{} i = 1, \dots, n\,\right\}$,
	where $n \in \N$, $b, b^{(i)}, c_j, c_j^{(i)} \in \R$ are constants and $\bm v_j, \bm v_j^{(i)} \in \R$ are variables.
	Without loss of generality minima are also allowed.
	\item \textbf{Do-Parallel}: this instruction contains a constant number of blocks of instruction sequences, each separated by an \textbf{and}. These blocks must be executable in parallel, meaning that each variable that is assigned in one block cannot appear in any other block.\footnote{Local variables in different blocks may have the same name if their scope is limited to their block.}
	\item \textbf{For-Do} loop: this is a standard for-loop with a \emph{constant number of iterations} that are executed sequentially.
	\item \textbf{For-Do-Parallel} loop: this is a for-loop with a \emph{constant number of iterations} in which the iterations are executed in parallel. Therefore, variables assigned in one iteration cannot be used in any other iteration.\footnote{Again, local variables within different iterations may have the same name if their scope is limited to their iteration.} 
\end{enumerate}

\Cref{Alg:Example} shows an example MAAP to illustrate the possible instructions.
\begin{algorithm2e}[ht]
	\SetKwFunction{Instructions}{Instructions}
	\SetKwInput{KwInput}{Input}
	\SetKwFor{ForParallel}{for each}{do parallel}{end}
	\SetKwIF{DoParallel}{AndIf}{And}{do parallel}{}{and}{and}{end}

	\caption{Instructions.\label{Alg:Example}}
	
	\KwInput{Input variables $\bm v_1, \bm v_2, \dots, \bm v_n$.}\vspace{0.5em}
	
	\tcp{Assignments and Expressions:}
	
	$\bm x_1 \leftarrow 4 +\sum_{i = 1}^n (-1)^i \cdot \bm v_i$ 
	
	$\bm x_2 \leftarrow \max\set{3 \cdot \bm v_1, -1.5 \cdot \bm v_n, \bm x_1, 5}$
	\vspace{0.5em}
	
	\tcp{For-Do loop:} 
	
	\For{$k = 1, \dots, n-1$}{
	
		$\bm v_{k+1} \leftarrow \bm v_k + \bm v_{k+1}$
	} \vspace{0.5em}
	
	\tcp{Do-Parallel:}       
	
	\uDoParallel{}{
		$\bm y_1 \leftarrow \max \set{\bm x_1, \bm x_2}$}
	\uAnd{
		$\bm y_2 \leftarrow 7$
	} 
	\And{
		$\bm y_3 \leftarrow \sum_{i = 1}^n \bm v_i$
	} \vspace{0.5em}
	
	\tcp{For-Do-Parallel loop:} 
	\ForParallel{$k = 4, \dots, n$}{
	
		$\bm y_{k} \leftarrow \bm v_{k-1} - \bm v_{k}$
		
		$\bm y_k \leftarrow \max\set{\bm y_k, 0}$
		
	} \vspace{0.5em}
	
	\Return{$(\bm y_1, \bm y_2, \dots, \bm y_n)$}
\end{algorithm2e}

Note that we do not allow any \textbf{if}-statements or other branching operations. In other words, the number of executed instructions of an algorithm is always the same independent of the input variables.

\subsection{Equivalence between MAAPs and Neural Networks}

In order to connect MAAPs with NNs, we introduce three complexity measures $d(A)$, $w(A)$, and $s(A)$ for a MAAP~$A$. We will then see that they yield a direct correspondence to depth, width, and size of a corresponding NN.

For these complexity measures for MAAPs, assignments of affine transformations come ``for free'' since in an NN this can be realized ``between layers''. This is a major difference to other (parallel) models of computation, e.g., the parallel random access machine (PRAM)~\cite{greenlaw1995limits}. Apart from that, the complexity measures are recursively defined as follows.

\begin{itemize}
	\item For an assignment $A$ with a maximum or minimum expression of $k\geq 2$ terms we define $d(A) \coloneqq \lceil \log_2 k \rceil$, $w(A)\coloneqq2k$, and $s(A)\coloneqq4k$.
	\item For a sequence $A$ of instruction blocks $B_1, B_2, \dots, B_k$ we define $d(A)\coloneqq\sum_{i=1}^k d(B_i)$, $w(A)\coloneqq\max_{i=1}^k w(B_i)$, and $s(A)\coloneqq\sum_{i=1}^k s(B_i)$.
	\item For a \textbf{Do-Parallel} instruction $A$ consisting of parallel blocks $B_1, B_2, \dots, B_k$ we define $d(A)\coloneqq\max_{i=1}^k d(B_i)$, $w(A)\coloneqq\sum_{i=1}^k w(B_i)$, and $s(A)\coloneqq\sum_{i=1}^k s(B_i)$.
	\item For a \textbf{For-Do} loop $A$ with $k$ iterations that executes block $B_i$ in iteration $i$ we define $d(A)\coloneqq\sum_{i=1}^k d(B_i)$, $w(A)\coloneqq\max_{i=1}^k w(B_i)$, and $s(A)\coloneqq\sum_{i=1}^k s(B_i)$.
	\item For a \textbf{For-Do-Parallel} loop with $k$ iterations that executes block $B_i$ in iteration~$i$ we define $d(A)\coloneqq\max_{i=1}^k d(B_i)$, $w(A)\coloneqq\sum_{i=1}^k w(B_i)$, and $s(A)\coloneqq\sum_{i=1}^k s(B_i)$.
\end{itemize}

With these definitions at hand, we can prove the desired correspondence between the complexities of MAAPs and NNs.

\NNMAAP*

\begin{proof}\
	\begin{enumerate}[(i)]
		\item First note that we can assume without loss of generality that $A$ does not contain \textbf{For-Do} or \textbf{For-Do-Parallel} loops. Indeed, since only a constant number of iterations is allowed in both cases, we can write them as a sequence of blocks or a \textbf{Do-Parallel} instruction, respectively. Note that this also does not alter the complexity measures $d(A)$, $w(A)$, and $s(A)$ by their definition. Hence, suppose for the remainder of the proof that $A$ consists only of assignments and (possibly nested) \textbf{Do-Parallel} instructions.
		
		The statement is proven by an induction on the number of lines of $A$. For the induction base suppose $A$ consists of a single assignment. If this is an affine expression, then an NN without hidden units (and hence with depth 1, width 0, and size 0) can compute $f$. If this is a maximum (or minimum) expression of $k$ terms, then, using the construction of~\cite[Lemma~D.3]{Arora:DNNwithReLU}, an NN with depth $\lceil \log_2 k \rceil + 1$, width $2k$, and size $4k$ can compute~$f$, which settles the induction base.
		
		For the induction step we consider two cases. If $A$ can be written as a sequence of two blocks~$B_1$ and~$B_2$, then, by induction, there are two NNs representing~$B_1$ and~$B_2$ with depth $d(B_i)+1$, width $w(B_i)$, and size $s(B_i)$ for $i=1,2$, respectively. An NN representing $A$ can be obtained by concatenating these two NNs, yielding an NN with depth $d(B_1)+d(B_2)+1=d(A)+1$, width $\max\set{w(B_1), w(B_2)} = w(A)$, and size~$s(B_1)+s(B_2)=s(A)$; cf.~\cite[Lemma~D.1]{Arora:DNNwithReLU}. Otherwise, $A$ consists of a unique outermost \textbf{Do-Parallel} instruction with blocks $B_1, B_2, \dots, B_k$. By induction, there are $k$ NNs representing $B_i$ with depth $d(B_i)+1$, width $w(B_i)$, and size~$s(B_i)$,~$i\in[k]$, respectively. An NN representing $A$ can be obtained by plugging all these NNs in parallel next to each other, resulting in an NN of depth $\max_{i=1}^k d(B_i)+1 = d(A) + 1$, width $\sum_{i=1}^k w(B_i) = w(A)$, and size $\sum_{i=1}^k s(B_i) = s(A)$. This completes the induction.
		
		\item Suppose the NN is given by a directed graph $G=(V,E)$ as described in \Cref{Sec:Prelim}. It is easy to verify that \Cref{Alg:NNtoMAAP} computes $f$ with the claimed complexity measures.
		\qedhere
	\end{enumerate}
\end{proof}

\begin{algorithm2e}[tb]
	\SetKwInput{KwInput}{Input}
	\SetKwFor{ForParallel}{for each}{do parallel}{end}
	
	\caption{A generic MAAP to execute a given NN.\label{Alg:NNtoMAAP}}
	
	\KwInput{Input variables $\bm o(v)$ for $v\in V_0$.}\vspace{0.5em}
	
	\tcp{For each hidden layer:} 
	
	\For{$\ell = 1, \dots, d$}{
	
		\tcp{For each neuron in the layer:}
		
		\ForParallel{$v\in V_\ell$}{
		
			$\bm a(v) \leftarrow b_v + \sum_{u\colon uv\in E} w_{uv} \bm o(u)$
			
			$\bm o(v) \leftarrow \max\set{0, \bm a(v)}$
		}
	} \vspace{0.5em}
	
	\tcp{For each output neuron:} 
	
	\ForParallel{$v\in V_{d+1}$}{
	
		$\bm a(v) \leftarrow b_v + \sum_{u\colon uv\in E} w_{uv} \bm o(u)$
		
	} \vspace{0.5em}
	
	\Return{$(\bm a(v))_{v\in V_{d+1}}$.}
\end{algorithm2e}

\subsection{Relation to Other Models of Real-Valued Computation}

Real-valued computation is often formalized via different versions of the \emph{Real RAM} model~\cite{shamos1979} or the \emph{Blum-Shub-Smale machine}~\cite{blum1989theory}. Defining real models of computation requires a bit of care because it easily happens that they unintentionally become too powerful. As an example, if a model allows both, \emph{indirect indexing}\footnote{This means that the sequence of variable accesses can depend on the input instance, which is common in most programming languages.} and \emph{multiplication} of real variables in constant time, then one can show that every problem solvable in polynomial space can also be solved in polynomial time~\cite{schonhage1979power,pratt1974characterization,hartmanis1974power}; compare also a concise recent discussion in~\cite{erickson2022smoothing}.

For MAAPs, the sequence of variable accesses is indeed predefined and cannot depend on the input. Therefore, our model does not include indirect indexing. Also multiplications of variables is not allowed, as we only allow scalar multiplications with fixed constants. Hence, MAAPs do not suffer from the issue described above. Nevertheless, the precise computational power of MAAPs remains an open question, to which we contribute with two positive results in this article. 

\section{Neural Networks for Minimum Spanning Trees}
\label{Sec:CO}

In this section we prove \Cref{Thm:MST} by showing correctness of \Cref{Alg:MST} and applying \Cref{Prop:NN-MAAP}.

\begin{restatable}{proposition}{MSTMAAP}\label{Prop:MSTMAAP}
	\Cref{Alg:MST} correctly computes the value of a minimum spanning tree in the complete graph on $n$ vertices.
\end{restatable}

\begin{proof}
	We use induction on $n$. The trivial case $n=2$ settles the induction start.
	For the induction step, we separately show that \Cref{Alg:MST} does neither over- nor underestimate the true objective value. For this purpose we show that minimum spanning trees in the original graph and the smaller graph with updated weights can be constructed from each other in a way that is consistent with the computation performed by \Cref{Alg:MST}.
	
	Suppose that the subroutine $\MST_{n-1}$ correctly computes the value of an MST for $n-1$ vertices. We need to show that the returned value $\bm y_n + \MST_{n-1}\left((\bm x'_{ij})_{1\leq i < j \leq n-1}\right)$ is indeed the MST value for $n$ vertices.
	
	First, we show that the value computed by \Cref{Alg:MST} is not larger than the correct objective value. To this end, let $T$ be the set of edges corresponding to an MST of $G$. By potential relabeling of the vertices, assume that $\bm y_n=\bm x_{1n}$. Note that we may assume without loss of generality that $v_1v_n\in T$: if this is not the case, adding it to $T$ creates a cycle in $T$ involving a second neighbor $v_i\neq v_1$ of $v_n$. Removing $v_iv_n$ from $T$ results again in a spanning tree with total weight at most the original weight.
	
	We construct a spanning tree $T'$ of the subgraph spanned by the first $n-1$ vertices as follows: $T'$ contains all edges of $T$ that are not incident with $v_n$. Additionally, for each~$v_iv_n\in T$, except for $v_1v_n$, we add the edge $v_1v_i$ to $T'$. It is immediate to verify that this construction results in fact in a spanning tree.
	We then obtain
	
	\begin{align*}
		\sum_{v_iv_j\in T} \bm x_{ij}
		&= \bm x_{1n} + \sum_{v_iv_n\in T,\ i>1} \bm x_{in} + \sum_{v_iv_j\in T,\ i,j<n} \bm x_{ij}\\
		&= \bm y_n + \sum_{v_iv_n\in T,\ i>1} (\bm x_{in} + \bm x_{1n} - \bm y_n) + \sum_{v_iv_j\in T,\ i,j<n} \bm x_{ij}\\
		&\geq \bm y_n + \sum_{v_iv_n\in T,\ i>1} \bm x'_{1i} + \sum_{v_iv_j\in T,\ i,j<n} \bm x'_{ij}\\
		&= \bm y_n + \sum_{v_iv_j\in T'} \bm x'_{ij}\\
		&\geq \bm y_n + \MST_{n-1}\left((\bm x'_{ij})_{1\leq i < j \leq n-1}\right).
	\end{align*}
	
	Here, the first inequality follows by the way how the values of $\bm x'$ are defined in line~\ref{Line:xprime} and the second inequality follows since $T'$ is a spanning tree of the first $n-1$ vertices and, by induction, the MAAP is correct for up to $n-1$ vertices. This completes the proof that the MAAP does not overestimate the objective value. 
	
	In order to show that the MAAP does not underestimate the true objective value, let~$T'$ be the set of edges of a minimum spanning tree of the first $n-1$ vertices with respect to the updated costs $\bm x'$. Let~\mbox{$E^*\subseteq T'$} be the subset of edges~$v_iv_j$, with \mbox{$1\leq i<j \leq n-1$}, in $T'$ that satisfy \mbox{$\bm x'_{ij}=\bm x_{in} + \bm x_{jn}-\bm y_n$}. Note that, in particular, we have $\bm x'_{ij}=\bm x_{ij}$ for all $v_iv_j\in T'\setminus E^*$, which will become important later. We show that we may assume without loss of generality that $E^*$ only contains edges incident with $v_1$. To do so, suppose there is an edge $v_iv_j\in E^*$ with $2\leq i<j \leq n-1$. Removing that edge from $T'$ disconnects exactly one of the two vertices $v_i$ and $v_j$ from $v_1$; say, it disconnects $v_j$. We then can add $v_1v_j$ to $T'$ and obtain another spanning tree in $G'$. Moreover, by the definition of the weights $\bm x'$ and the choice of~$v_1$, we obtain $\bm x'_{1j}\leq \bm x_{1n} + \bm x_{jn} - \bm y_n \leq \bm x_{in} + \bm x_{jn}-\bm y_n = \bm x'_{ij}$. Hence, the new spanning tree is still minimal. This procedure can be repeated until every edge in $E^*$ is incident with $v_1$.
	
	Now, we construct a spanning tree $T$ in $G$ from $T'$ as follows: $T$ contains all edges of~$T'\setminus E^*$. Additionally, for every $v_1v_i\in E^*$, we add the edge $v_iv_n$ to $T$. Finally, we also add~$v_1v_n$ to $T$. Again it is immediate to verify that this construction results in fact in a spanning tree, and we obtain
	
	\begin{align*}
		\sum_{v_iv_j\in T} \bm x_{ij}
		&= \bm x_{1n} + \sum_{v_1v_i\in E^*} \bm x_{in} + \sum_{v_iv_j\in T'\setminus E^*} \bm x_{ij}\\
		&= \bm y_n + \sum_{v_1v_i\in E^*} (\bm x_{in} + \bm x_{1n} - \bm y_n) + \sum_{v_iv_j\in T'\setminus E^*} \bm x_{ij}\\
		&= \bm y_n + \sum_{v_1v_i\in E^*} \bm x'_{1i} + \sum_{v_iv_j\in T'\setminus E^*} \bm x'_{ij}\\
		&= \bm y_n + \sum_{v_iv_j\in T'} \bm x'_{ij}\\
		&= \bm y_n + \MST_{n-1}\left((\bm x'_{ij})_{1\leq i < j \leq n-1}\right).
	\end{align*}
	
	This shows that the MAAP returns precisely the value of the spanning tree $T$. Hence, its output is at least as large as the value of an MST, completing the second direction.
\end{proof}

Finally, we prove complexity bounds for the MAAP, allowing us to bound the size of the corresponding NN.

\thmMST*
\begin{proof}
	In \Cref{Prop:MSTMAAP}, we have seen that \Cref{Alg:MST} performs the required computation. We show that $d(\MST_n)=\mathcal{O}(n\log n)$, $w(\MST_n)=\mathcal{O}(n^2)$, and $s(\MST_n)=\mathcal{O}(n^3)$. Then, the claim follows by \Cref{Prop:NN-MAAP}.
	
	Concerning the complexity measure $d$, observe that in each recursion the bottleneck is to compute the minimum in line~\ref{Line:compy}. This is of logarithmic order. Since we have $n$ recursions, it follows that $d(\MST_n)=\mathcal{O}(n\log n)$.
	
	Concerning the complexity measure $w$, observe that the bottleneck is to compute the parallel \textbf{for} loop in line~\ref{Line:xprime}. This is of quadratic order, resulting in $w(\MST_n)=\mathcal{O}(n^2)$.
	
	Finally, concerning the complexity measure $s$, the bottleneck is also the parallel \textbf{for} loop in line~\ref{Line:xprime}. Again, this is of quadratic order and since we have $n$ recursions, we arrive at~$s(\MST_n)=\mathcal{O}(n^3)$.
\end{proof}

\section{Neural Networks for Maximum Flows}\label{Sec:MaxFlow}

In this section we show that, given a fixed directed graph with $n$ nodes and $m$ edges, there is a polynomial-size NN computing a function of type~\mbox{$\R^{m}\to\R^{m}$} that maps arc capacities to a corresponding maximum flow. We achieve this by proving correctness of \Cref{Alg:max_flow} and applying \Cref{Prop:NN-MAAP}. Before we start, we formally set definitions and notations around the Maximum Flow Problem.

\subsection{The Maximum Flow Problem}

Let $G = (V, E)$ be a directed graph with a finite node set $V =
\set{v_1, \dots, v_n}$, $n \in \N$, containing a source $s = v_1$, a sink $t = v_n$, and an arc set $E \subseteq
V^2 \setminus \set{vv|v \in V}$ in which each arc~$e \in E$ is equipped with a capacity $\nu_e \geq 0$. We write $m=\abs{E}$ for the number of arcs,~$\delta^+_v$~and~$\delta^-_v$ for the sets of outgoing and incoming arcs of node $v$, as well as $N^+_v$ and~$N^-_v$ for the sets of successor and predecessor nodes of $v$ in $G$, respectively. The distance~$\dist_{G}(v,w)$ denotes the minimum number of arcs on any path from $v$ to $w$ in $G$.

The \emph{Maximum Flow Problem} consists of finding an $s$-$t$-flow $(y_e)_{e \in
	E}$ satisfying $0 \leq y_e \leq \nu_e$ and~\mbox{$\sum_{e \in \delta^-_v} y_e = \sum_{e \in \delta^+_v} y_e$} for all $v \in V
\setminus \set{s,t}$ such that the \emph{flow value} $\sum_{e \in \delta^+_s} y_e - \sum_{e \in \delta^-_s} y_e$ is maximal.

For the sake of an easier notation we assume for each arc $e=uv\in E$ that its reverse arc~$vu$ is also contained in $E$. This is without loss of generality because we can use capacity~$\nu_e=0$ for arcs that are not part of the original set $E$.
In order to avoid redundancy we represent flow only in one arc direction. More precisely, with $\vec E=\set{v_iv_j\in E | i<j}$ being the set of \emph{forward arcs}, we denote a flow by $(y_e)_{e \in \vec E}$. The capacity constraints therefore state that $-\nu_{vu} \leq y_{uv} \leq \nu_{uv}$. Hence, a negative flow value on a forward arc $uv \in \vec E$ denotes a positive flow on the corresponding \emph{backward arc} $vu$.

A crucial construction for maximum flow algorithms is the \emph{residual network}. For a given $s$-$t$-flow $(y_{e})_{e \in \vec E}$, the \emph{residual capacities} are defined as follows. For an arc $uv \in \vec E$ the \emph{residual forward capacity} is given by $c_{uv} \coloneqq \nu_{uv} - y_{uv}$ and the \emph{residual backward capacity} by~$c_{vu} \coloneqq \nu_{vu} + y_{uv}$. The \emph{residual network} consists of all directed arcs with positive residual capacity. Hence, it is given by $G^*=(V, E^*)$ with $E^* \coloneqq \set{e \in E | c_e > 0}$.

\subsection{Analysis of the Subroutine}

We first analyse the subroutine \texttt{FindAugmentingFlow$_k$} given in \Cref{Alg:augmenting_flow}.
The following theorem states that the subroutine indeed computes an augmenting flow fulfilling all required properties.

\begin{samepage}
	\begin{restatable}{theorem}{thmaugmentingflow} \label{Thm:augmenting_flow}
		Let $(\bm c_e)_{e \in E}$ be residual capacities such that the distance between $s$ and $t$ in the residual network $G^*=(V,
		E^*)$ is at least $k$. Then the MAAP given in \Cref{Alg:augmenting_flow} returns an $s$-$t$-flow $\bm y = (\bm y_{uv})_{{uv} \in \vec E}$ with $-\bm c_{vu} \leq \bm y_{uv} \leq \bm c_{uv}$ such that there is positive flow only on arcs that lie on an $s$-$t$-path of length exactly $k$ in $G^*$. If the distance of $s$ and $t$ in the residual network is exactly $k$, then $\bm y$
		has a strictly positive flow value and there exists at least one saturated arc, i.e., one arc $e \in E^*$ with
		$\bm y_{e} = \bm c_{e}$.
	\end{restatable}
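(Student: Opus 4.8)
The plan is to treat two regimes. I would first prove, by induction on $i$, that $\bm a_{i,v}$ equals the maximum bottleneck residual capacity over all $v$-$t$-walks of length exactly $i$ in $G^*$ (and $0$ if none exists); the base $\bm a_{1,v}=\bm c_{vt}$ and the $\max$-$\min$ recurrence in line~\ref{line:end_fattest_path} give this directly. If $\dist_{G^*}(s,t)>k$, then any length-$k$ walk from $s$ to $t$ would contain an $s$-$t$-path of length $\le k$, so $\bm a_{k,s}=0$; the pushing phase moves nothing and the returned $\bm y$ is the zero flow, which satisfies all claims vacuously. Hence the substance lies in the case $\dist_{G^*}(s,t)=k$, on which I focus; write $d(v):=\dist_{G^*}(v,t)$.

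Next I would establish the layered structure. Since a length-$(i-1)$ walk from $w$ to $t$ forces $d(w)\le i-1$ while any arc $vw$ gives $d(w)\ge d(v)-1$, an induction over the outer pushing iterations shows that $\bm Y_v^i>0$ forces $d(v)=i$, and that flow is pushed across $vw$ only in iteration $i=d(v)$, with $d(w)=d(v)-1$. Thus all flow lives on the shortest-path DAG, each arc is pushed exactly once with amount at most $\bm c_{vw}$ (so $0\le\bm z_{vw}\le\bm c_{vw}$), at most one of $\bm z_{vw},\bm z_{wv}$ is ever positive, and every arc carrying flow lies on a shortest, hence length-exactly-$k$, $s$-$t$-path. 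Writing the output as $\bm y_{uv}=\bm z_{uv}-\bm z_{vu}$, this yields both the range $-\bm c_{vu}\le\bm y_{uv}\le\bm c_{uv}$ and the claim that positive flow occurs only on length-$k$ arcs.

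For feasibility I would set $\Phi_v:=\sum_w\bm z_{vw}-\sum_w\bm z_{wv}$ and check that every elementary step in the pushing \emph{and} the clean-up phase preserves $\Phi_v+\sum_i\bm Y_v^i$; hence this quantity stays $\bm a_{k,s}$ at $s$ and $0$ elsewhere. It then remains to show the clean-up zeroes every internal $\bm Y_w^i$. Processing levels $2,\dots,k-1$ upward, at the moment $w$ is handled its incoming flow still has its post-push value, which equals its post-push excess plus its post-push outflow; the excess that has cascaded into $w$ from the level below is at most that outflow, so the incoming flow absorbs the entire current excess, which is pushed one level up toward $s$. After clean-up all internal excesses vanish (and all level-$1$ excess via the push-to-$t$ step), so $\Phi_v=0$ for $v\ne s,t$ and $\bm y$ is a genuine $s$-$t$-flow of value $F=\sum_{v\in N^-_t}\bm z_{vt}$.

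The heart of the proof, and the step I expect to be hardest, is producing a saturated arc; positivity of $F$ then follows, since a conserved flow on the acyclic layered network that saturates some arc must carry positive $s$-$t$-value (alternatively, a direct ``first pusher'' argument shows positive flow reaches each successive level, because the first positive-excess node at a level faces empty receivers). I would argue saturation by contradiction: assume $\bm y$ saturates no arc, so $\bm z_{vw}<\bm c_{vw}$ on every layered arc. The crucial observation is that clean-up only alters arcs entering a level that still carries excess, so I would prove by induction up the levels the statement $R(i)$: every push into level $i$ is sender-limited (neither arc- nor receiver-limited). The base $R(1)$ holds because arcs into level $1$ and into $t$ are never touched by clean-up, so no-saturation yields $\bm Y_w^1<\bm c_{wt}=\bm a_{1,w}$ (no full receiver) and no saturated arc into level $1$. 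For the step, $R(i)$ forces every level-$(i+1)$ node to dump all its flow onto its first successor, hence to have zero post-push excess, hence clean-up at level $i+1$ is vacuous and arcs into level $i+1$ stay at their post-push values; no-saturation then excludes arc-limited pushes there, while a filled node $w$ at level $i+1$ is impossible, since dumping $\bm a_{i+1,w}$ sender-limited onto its first successor $w'$ would force $\bm a_{i+1,w}<\min\{\bm c_{ww'},\bm a_{i,w'}\}$, contradicting $\bm a_{i+1,w}=\max_x\min\{\bm c_{wx},\bm a_{i,x}\}$. Propagating $R(\cdot)$ up to level $k-1$ forces $s$ to dump $\bm a_{k,s}$ sender-limited onto a single successor, giving $\bm a_{k,s}<\min\{\bm c_{sw'},\bm a_{k-1,w'}\}\le\bm a_{k,s}$, a contradiction. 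The delicate points are exactly the clean-up interaction (which saturations survive, resolved by the ``zero excess $\Rightarrow$ clean-up vacuous'' chain that makes post-push and final capacities agree level by level) and the precise $\min$/$\max$ bookkeeping in the sender-limited contradiction.
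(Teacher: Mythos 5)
Your proposal is correct, and while its feasibility half matches the paper in substance, its saturation half takes a genuinely different route. For feasibility, the paper proves the invariant \eqref{eq:Y_is_excess_flow} — $\bm Y_u^j$ is zero except for $j=k-\dist_{G^*}(s,u)$, where it equals the excess of $\bm z$ at $u$, both after pushing and throughout clean-up — and then shows each node's excess is absorbed because its still-untouched inflow dominates it; this is exactly your conserved quantity $\Phi_v+\sum_i \bm Y_v^i$ together with your ``cascaded excess $\le$ post-push outflow $\le$ untouched inflow'' step, merely indexed by $\dist_{G^*}(s,\cdot)$ instead of via your (valid) reduction of the case $\dist_{G^*}(s,t)>k$ to the zero flow. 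For saturation, the paper argues directly: it picks $(v^*,i^*)$ with positive post-push excess and $i^*$ minimal (or $(s,k)$ if none), takes the lexicographically smallest shortest $v^*$-$t$-path $P$, shows the index-order greediness pushes at least the bottleneck value $\bm c_{\min}$ along every arc of $P$, and notes clean-up only touches arcs at distance more than $i^*$ from $t$, so the bottleneck arc ends saturated. You instead assume no saturation and induct up the levels that all pushes are sender-limited, making clean-up vacuous level by level until $s$'s dump contradicts the max--min recurrence defining $\bm a_{k,s}$. Both exploit the same mechanism — your ``first positive-cap successor'' cascade is precisely the trail the paper's lex-min path traces — but the paper's witness localizes the saturated arc and handles leftover excess directly, whereas your contradiction avoids tracking an explicit path at the price of the tie bookkeeping you flag: as stated, a zero push across a zero-capacity arc or into a node with $\bm a_{i,w}=0$ is formally ``arc-'' or ``receiver-limited,'' so $R(i)$ should be read as ``no sender's positive excess is ever blocked by an arc cap or a filled receiver''; with that reading, your tie analysis (arc tie yields a preserved saturation, receiver tie yields a filled node excluded one level below) closes the argument.
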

\end{samepage}

The proof idea is as follows.
We first show that we correctly track the excessive flow at each vertex $u$ throughout the whole subroutine with variables $\bm Y_u^i$. This will be used to show two essential properties. First, by the way how we bound the amount of our pushes in terms of the fattest flow values, we ensure that the augmenting flow will only be positive on arcs contained in a shortest $s$-$t$-path. Second, by a careful analysis of the cleanup procedure, we obtain that the final flow fulfills flow conservation. It is then easy to see that it also respects the residual capacities.

In order to show that at least one residual arc is saturated we consider a node $v^*$ that has positive excess flow
after the pushing phase. Among these, $v^*$ is chosen as one of the closest nodes to~$t$ in~$G^*$. From all shortest $v^*$-$t$-paths in~$G^*$ we pick the path~$P$
that has lexicographically the smallest string of node indices. As the fattest path from $v^*$ to $t$ has at least the
residual capacity of~$P$ (given by the minimal residual capacity of all arcs along~$P$), the pushing procedure has pushed
at least this value along $P$. Hence, the arc on $P$ with minimal capacity has to be saturated. It is then easy to show
that the clean-up does not reduce the value along $P$.

\begin{proof}[Proof of \Cref{Thm:augmenting_flow}]
	It is easy to check that lines~\ref{line:begin_fattest_path} to \ref{line:end_fattest_path} from the algorithm do
	indeed compute the maximal flow value $\bm a_{i,v}$ that can be send from $v$ to $t$ along a single path (which we call
	the \emph{fattest path}) of length exactly $i$.
	
	In the following we show that in line \ref{line:end} the arc vector $\bm z = (\bm z_e)_{e \in E}$ forms an $s$-$t$-flow in the residual network $G^* = (V,E^*)$ that satisfies $0 \leq \bm z_e \leq \bm c_e$.
	For this, recall that $\dist_{G^*}(s,u)$ denotes the distance from $s$ to $u$ in the residual network.
	
	In order to prove flow conservation of $\bm z$ at all vertices except for $s$ and $t$, we fix some node $u \in V \setminus \set{t}$ and show that
	\begin{equation} \label{eq:Y_is_excess_flow}  
		\bm Y_u^{j} = \begin{cases}
			\sum_{e \in \delta_u^-} \bm z_e - \sum_{e \in \delta_u^+} \bm z_e & \text{if } j = k - \dist_{G^*}(s,u),\\
			0 & \text{otherwise,}
		\end{cases}
	\end{equation}
	holds throughout the execution of the subroutine.
	
	\begin{claim} \label{claim:Y_is_excess}
		\Cref{eq:Y_is_excess_flow} holds after the pushing procedure (lines \ref{line:begin_pushing} to \ref{line:end_pushing}).
	\end{claim}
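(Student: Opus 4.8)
The plan is to prove \Cref{claim:Y_is_excess} by carrying an invariant through the pushing loop and specializing it at the end, the central auxiliary fact being a \emph{level-locking} property: throughout lines~\ref{line:begin_pushing}--\ref{line:end_pushing}, the excess variable $\bm Y_v^i$ can be nonzero only at the single level $i = k - \dist_{G^*}(s,v)$, and consequently an arc value $\bm z_{vw}$ can become positive only when $\dist_{G^*}(s,w) = \dist_{G^*}(s,v) + 1$. First I would record the routine side-invariants $\bm Y_v^i \geq 0$ and $\bm Y_w^{i-1} \leq \bm a_{i-1,w}$, both of which follow immediately from the definition of $\bm f$ in line~\ref{line:find_push_amount} (the third argument of the minimum caps the receiving excess by the fattest-path value) and which guarantee $\bm f \geq 0$, so that all updates have the expected signs; together with the observation that each arc $vw$ is touched at most once these also yield $0 \leq \bm z_e \leq \bm c_e$ en route.

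The hard part will be the level-locking property itself, which I would prove by induction on the outer loop index $i$ running downward from $k$. The base case is immediate: line~\ref{line:begin_pushing} makes only $\bm Y_s^k$ nonzero, and $\dist_{G^*}(s,s) = 0 = k - k$. For the step, a positive push $\bm f > 0$ over an arc $vw$ in iteration $i$ forces $\bm Y_v^i > 0$, $\bm c_{vw} > 0$ (so $vw \in E^*$), and $\bm a_{i-1,w} > 0$ (so a residual path of length exactly $i-1$ from $w$ to $t$ exists, giving $\dist_{G^*}(w,t) \leq i-1$). The inductive hypothesis gives $\dist_{G^*}(s,v) = k-i$, whence $\dist_{G^*}(s,w) \leq k-i+1$. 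Combining, $\dist_{G^*}(s,t) \leq \dist_{G^*}(s,w) + \dist_{G^*}(w,t) \leq (k-i+1)+(i-1) = k$; but the hypothesis $\dist_{G^*}(s,t) \geq k$ forces every inequality to be tight, so $\dist_{G^*}(s,w) = k-(i-1)$ and flow arrives at $w$ exactly at its own level. This is precisely the step where the distance hypothesis is indispensable, and it is what rules out flow ever drifting onto a non-shortest path; I expect the careful pinning of both $\dist_{G^*}(s,w)$ and $\dist_{G^*}(w,t)$ via the concatenation argument to be the main obstacle.

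With level-locking established, the conservation statement is bookkeeping. Each push in lines~\ref{line:increase_of_z}--\ref{line:increase_of_Y} moves one value $\bm f$ simultaneously out of $\bm Y_v^i$, into $\bm Y_w^{i-1}$, and onto $\bm z_{vw}$, and the terminal step in lines~\ref{line:start_t_pushing}--\ref{line:end_pushing} is the corresponding level-$1$-to-$t$ push. Since each node $u$ confines its flow to the single level $j = k - \dist_{G^*}(s,u)$, every increment of an outgoing arc of $u$ decrements $\bm Y_u^j$ and every increment of an incoming arc increments $\bm Y_u^j$; summing over the whole procedure gives, for each internal node $u \notin \set{s,t}$, that $\bm Y_u^{j} = \sum_{e\in\delta_u^-}\bm z_e - \sum_{e\in\delta_u^+}\bm z_e$ and $\bm Y_u^i = 0$ for $i \neq j$, which is exactly \Cref{eq:Y_is_excess_flow}. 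I would phrase this as one invariant preserved by every individual push, noting that the descending loop order ensures all inflow at level $i-1$ has arrived before $w$ is itself processed as a source at level $i-1$, so the attribution of each $\bm z_{vw}$ to a unique level is consistent.

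Finally I would dispose of the boundary cases. The sink $t$ is excluded from the statement. At the source, $\bm Y_s^k$ starts at $\bm a_{k,s}$ and receives no inflow, since flow arriving at $s$ would require level $k+1$, which is impossible; I would therefore treat the initial injection $\bm a_{k,s}$ as excess already present at level $k$ at the start, so that the source falls under the same bookkeeping as the internal nodes. In the degenerate case $\dist_{G^*}(s,t) > k$ we have $\bm a_{k,s} = 0$ and all variables remain zero, so the claim holds trivially; the entire content lies in the case $\dist_{G^*}(s,t) = k$.
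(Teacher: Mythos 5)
Your proof is correct and follows the same two-step architecture as the paper's proof of \Cref{claim:Y_is_excess} --- first show that all excess at a node $u$ is \emph{locked} to the single level $k-\dist_{G^*}(s,u)$, then observe that every positive push changes the $\bm z$- and $\bm Y$-variables in lockstep --- but your proof of the locking fact takes a genuinely different route. The paper splits it into two cases: for levels $j<k-\dist_{G^*}(s,u)$ it argues $\bm a_{j,u}=0$, so the third argument of the minimum in line~\ref{line:find_push_amount} blocks any inflow; for $j>k-\dist_{G^*}(s,u)$ it runs an induction on $\dist_{G^*}(s,u)$, showing any potential pusher $v$ with $\bm c_{vw}>0$ already has zero excess at the relevant level. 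You instead run a single downward induction on the loop index $i$ and dispose of both cases at once by a pinching argument: a positive push over $vw$ in iteration $i$ gives $\dist_{G^*}(s,w)\le(k-i)+1$ (from $\bm c_{vw}>0$ and the inductive hypothesis at $v$) and $\dist_{G^*}(w,t)\le i-1$ (from $\bm a_{i-1,w}>0$), and the hypothesis $\dist_{G^*}(s,t)\ge k$ forces both bounds to be tight, so flow arrives at $w$ exactly at level $i-1=k-\dist_{G^*}(s,w)$. This unification is arguably cleaner, ties the induction to the program's control flow rather than to graph distances, and makes the role of the distance hypothesis fully explicit (the paper uses it only in the ``too-low level'' case). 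You are also more careful on two points the paper leaves implicit: you verify $\bm f\ge 0$ via the invariants $\bm Y_v^i\ge 0$ and $\bm Y_w^{i-1}\le\bm a_{i-1,w}$, and you restrict the conservation identity to internal nodes $u\notin\set{s,t}$, treating the injection $\bm a_{k,s}$ as excess initially present at level $k$ --- indeed, \Cref{eq:Y_is_excess_flow} read literally fails at $u=s$ once line~\ref{line:begin_pushing} executes, and the paper's proof silently makes the same restriction (``at all vertices except for $s$ and $t$''). One cosmetic issue: your early claim that each arc is pushed on at most once (used only for $0\le\bm z_e\le\bm c_e$, which is not part of \Cref{claim:Y_is_excess} anyway) itself depends on the locking property, so in a final write-up it should be stated after the induction, not before.
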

	
	\begin{subproof}[Proof of \Cref{claim:Y_is_excess}]
		For $j < k-\dist_{G^*}(s,u)$, there does not exist any $u$-$t$-path of length~$j$ (since $j + \dist_{G^*}(s,u) < k \leq \dist_{G^*}(s,t)$). Hence, the fattest
		$u$-$t$-path of length exactly~$j$ has capacity~$\bm a_{j,u}=0$.
		In any iteration that might increase $\bm Y_u^j$, that is, for $i=j+1$, $v \in V \setminus \set{u, t}$, and $w=u$, we have $\bm f = 0$. This implies that $\bm Y_u^j$ remains $0$.
		
		For $j > k - \dist_{G^*}(s,u)$, the intuition is that there is no $s$-$u$-path of length $k - j$ in the residual graph as $k - j < \dist_{G^*}(s,u)$. Therefore, it is not possible that any flow reaches vertex $u$ in $k-j$ iterations and $Y_u^j$ stays $0$. To formally prove $\bm Y_u^j = 0$ in this case, we use a downward induction on $j$ from $k$ to $1$.
		
		In the base case $j=k$, we have that $\bm Y_u^j = \bm Y_u^k = 0$ since for $u \neq s$ it holds that $\bm Y_u^k$ stays $0$ during the whole algorithm.
		
		For the induction step, fix a given $j<k$. Observe that $\bm Y_u^j$ can only be increased in line~\ref{line:increase_of_Y} when we are in iteration $i=j+1$ and $w=u$ is the successor of some vertex $v$ from which we push. The amount $\bm f$, by which we increase $\bm Y_u^j$, can only be positive if the residual capacity $\bm c_{vw}=\bm c_{vu}$ is positive. This implies $\dist_{G^*}(s,u)\leq \dist_{G^*}(s,v)+1$ and thus $i = j+1 > k - \dist_{G^*}(s,u) + 1 \geq k - \dist_{G^*}(s,v)$. We can therefore apply the induction hypothesis and conclude that $\bm Y_v^i=0$. This, however, implies $\bm f=0$ and therefore that $\bm Y_u^j$ remains $0$ too, completing the inductive proof of this case.
		
		Summarizing, we obtain that $\bm Y_u^j$ can only be non-zero for $j = k-\dist_{G^*}(s,u)$. In each iteration
		with $i = k-\dist_{G^*}(s,u) +1$ and $w = u$ we add $\bm f$ to the flow value $\bm z_{vu}$ and the same to $\bm
		Y_u^{k-\dist_{G^*}(s,u)}$ and in each iteration with $i = k-\dist_{G^*}(s,u)$ and $v = u$ we add~$\bm f$ to the flow
		value $\bm z_{uw}$ and subtract $\bm f$ from $\bm Y_u^{k-\dist_{G^*}(s,u)}$. Hence, $\bm Y_u^{k-\dist_{G^*}(s,u)}$
		denotes exactly the excessive flow after the pushing procedure as stated in \eqref{eq:Y_is_excess_flow}.
	\end{subproof}
	
	This claim already shows that $\bm z_e$ can only be positive if $e$ lies on an $s$-$t$-path of length exactly $k$, which is a
	shortest path in the residual network. To see this, let $vw$ be an arc that is not on such a path. In line~\ref{line:find_push_amount}, it either holds that $\bm Y_v^i = 0$ (if $i \neq k - \dist_{G^*}(s,u)$) or $\bm a_{i - 1, w} = 0$ because for $i = k -
	\dist_{G^*}(s,u)$ there is no $w$-$t$-path of length $i-1 = k - \dist_{G^*}(s,u) - 1$ (since otherwise  $vw$ would
	lie on an $s$-$t$-path of length $k$). Thus, $\bm z_{vw}$ will never be increased. As the clean-up only reduces the flow values, $\bm z_{vw}$ will still be $0$ at the end (line \ref{line:end}).
	
	\begin{claim} \label{claim:Y_is_excess_after_clean_up}		\Cref{eq:Y_is_excess_flow} holds in each iteration of the clean-up (lines \ref{line:begin_clean_up} to \ref{line:end_clean_up}).
	\end{claim}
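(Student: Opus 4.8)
The plan is to prove the claim as a loop invariant maintained by the clean-up phase (lines \ref{line:begin_clean_up}--\ref{line:end_clean_up}). By \Cref{claim:Y_is_excess}, \eqref{eq:Y_is_excess_flow} holds at the moment the clean-up begins, so it suffices to show that every execution of the innermost body (for a fixed triple $(i,w,v)$) preserves it; the full statement then follows by induction over the sequence of these innermost iterations. The convenient feature is that a single iteration modifies only $\bm z_{vw}$ among the flow variables, and only $\bm Y_w^i$ and $\bm Y_v^{i+1}$ among the excess variables. Hence the net inflow $\sum_{e\in\delta_u^-}\bm z_e-\sum_{e\in\delta_u^+}\bm z_e$ changes for no node other than $w$ (where it decreases by $\bm b$, since $vw\in\delta_w^-$) and $v$ (where it increases by $\bm b$, since $vw\in\delta_v^+$). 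So I only have to check the invariant at $w$ and $v$.

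First I would dispose of the degenerate case $i\neq k-\dist_{G^*}(s,w)$. Here the invariant, taken as inductive hypothesis before the current step, forces $\bm Y_w^i=0$. Since $\bm z_{vw}\ge 0$ throughout (the pushing phase only adds nonnegative amounts and each clean-up step subtracts $\bm b\le\bm z_{vw}$), we get $\bm b=\min\set{\bm Y_w^i,\bm z_{vw}}=\min\set{0,\bm z_{vw}}=0$, so nothing changes and the invariant is trivially preserved. In the main case $i=k-\dist_{G^*}(s,w)$, the decrement of $\bm Y_w^i$ by $\bm b$ is exactly matched by the decrease of $w$'s net inflow by $\bm b$, so the invariant at $w$ is kept.

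The one genuinely non-trivial point---and the place I expect the argument to hinge---is matching the index $i+1$ of the excess variable $\bm Y_v^{i+1}$ that gets increased in line \ref{line:increase_of_Y_by_b} with the value $k-\dist_{G^*}(s,v)$ prescribed by the invariant. If $\bm b=0$ there is nothing to check, so assume $\bm b>0$, which forces $\bm z_{vw}>0$. By the observation established immediately after \Cref{claim:Y_is_excess}, a positive $\bm z_{vw}$ means that $vw$ lies on a shortest $s$-$t$-path of $G^*$, whence $\dist_{G^*}(s,w)=\dist_{G^*}(s,v)+1$. Combined with $i=k-\dist_{G^*}(s,w)$ this yields $i+1=k-\dist_{G^*}(s,v)$, so the variable being increased is precisely $\bm Y_v^{k-\dist_{G^*}(s,v)}$; its increase by $\bm b$ matches the increase of $v$'s net inflow by $\bm b$, while all other $\bm Y_v^j$ remain untouched and hence zero. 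This preserves \eqref{eq:Y_is_excess_flow} at $v$ as well, completing the inductive step.

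Finally, I would note that the prescribed reverse-index traversal order of the two inner loops plays no role for this particular invariant: each innermost iteration preserves \eqref{eq:Y_is_excess_flow} on its own, irrespective of the order in which the triples $(i,w,v)$ are processed. (The ordering will matter elsewhere, e.g.\ for guaranteeing that the clean-up ultimately drives every excess to zero and does not destroy the saturated arc, but not for the present claim.) Thus the only real substance is the distance-matching step of the previous paragraph, which reduces cleanly to the already-established fact that $\bm z$ carries flow only on shortest-path arcs.
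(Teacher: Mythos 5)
Your proof is correct and hinges on exactly the same key fact as the paper's: that $\bm z_{vw}>0$ forces $vw$ onto a shortest $s$-$t$-path, hence $\dist_{G^*}(s,w)=\dist_{G^*}(s,v)+1$ and the index $i+1$ of the incremented excess matches $k-\dist_{G^*}(s,v)$, with the same bookkeeping at $v$ and $w$. The only difference is organizational --- you run a single induction over the innermost clean-up iterations, while the paper first shows by induction over the level index $j$ that wrong-index excesses stay zero (via $\bm b=0$) and then checks the balance at $j=k-\dist_{G^*}(s,u)$ --- so the mathematical content is identical.
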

	\begin{subproof}[Proof of \Cref{claim:Y_is_excess_after_clean_up}]
		First, we show that $\bm Y_u^j$ stays $0$ for $j \neq k - \dist_{G^*}(s,u)$ by induction over $j = 2,3, \dots, k$. The
		base case follows immediately as we only subtract $\bm b \geq 0$ from $\bm Y_u^2$. For the induction step we have to
		show that $\bm b = 0$ whenever we add $\bm b$ to $\bm Y_u^j$ in line \ref{line:increase_of_Y_by_b}. In all iterations
		with $i = j-1$ and $v = u$ we either have $\bm z_{uw} = 0$ or $\bm Y_w^i = 0$. The reason for this is that $\bm z_{uw}
		> 0$ implies that $uw$ lies on a shortest $s$-$t$-path, which means that $\dist_{G^*}(s,w) = \dist_{G^*}(s,u) + 1$, and
		hence, $i = j-1 \neq k - \dist_{G^*}(s,u) - 1 = k - \dist_{G^*}(s,w)$. By induction this means that $\bm Y_w^i = 0$.
		Either way this implies $\bm b = 0$.
		
		\Cref{eq:Y_is_excess_flow} holds for $j = k - \dist_{G^*}(s,u)$ since for $e \in \delta_u^-$ the value $\bm b$ is only
		possibly positive for $i = k - \dist_{G^*}(s,u)$ and then it is subtracted from $\bm z_e$ as well as from $\bm Y_u^{k -\dist_{G^*}(s,u)}$. For $e
		\in \delta_u^+$, the value $\bm b$ can only be positive for $i = k - \dist_{G^*}(s,u) + 1$, and hence, $\bm b$ is subtracted from $\bm z_e$
		exactly when it is added to $\bm Y_u^{k - \dist_{G^*}(s,u)}$.
	\end{subproof}
	
	Next, we show that at the end of the subroutine it holds that $\bm Y_u^j = 0$ for all $j$, in particular also for $j = k -\dist_{G^*}(s,u)$. The only exception of this is
	$\bm Y_s^k$. To see this, first observe that during the clean-up, $\bm Y_u^{k - \dist_{G^*}(s,u)}$ is maximal after iteration $i = k
	- \dist_{G^*}(s,u)-1$ and does not increase anymore for $i \geq k - \dist_{G^*}(s,u)$. At the start of iteration $i = k - \dist_{G^*}(s,u)$ it holds due to \eqref{eq:Y_is_excess_flow} that
	\[\sum_{e \in \delta_u^-} \bm z_e \geq \bm Y_u^{k - \dist_{G^*}(s,u)}.\]
	Hence, for $i = k - \dist_{G^*}(s,u)$ and $w = u$ there is one iteration for all $e \in \delta_u^-$ and within this
	iteration $\bm Y_u^{k - \dist_{G^*}(s,u)}$ is reduced by $\bm z_e$ until $\bm Y_u^{k - \dist_{G^*}(s,u)} = 0$. This shows
	that after all iterations with $i = k - \dist_{G^*}(s,u)$ it holds that $\bm Y_u^{k - \dist_{G^*}(s,u)} = 0$. Together with~\eqref{eq:Y_is_excess_flow}, this immediately
	implies flow conservation of $(\bm z_e)_{e \in E}$.
	
	Finally, in order to show that $0 \leq \bm z_e \leq \bm c_e$, note that $\bm z_e$ is initialized with $0$ and it is only
	increased in line~\ref{line:increase_of_z} of the unique iteration with $vw = e$ and $i = k-\dist_{G^*}(s,v)$, as we have argued in
	the proof of \Cref{claim:Y_is_excess}. In this iteration we have that $\bm f \leq \bm c_e$, which immediately shows that
	$0 \leq \bm z_e \leq \bm c_e$.
	
	It only remains to show that at least one residual arc is saturated. To this end, suppose that the distance
	of $s$ and $t$ in $G^*$ is $k$, which means that there exists at least one $s$-$t$-path of length exactly $k$ with a strictly
	positive residual capacity on all arc along this path.
	
	Let us consider the set $\set{(v, i) |\bm Y^i_v  > 0 \text{ after the pushing procedure}}$. These are all nodes that need to be
	cleaned up in order to restore flow conservation, paired with their distance to $t$. Let $(v^*, i^*)$ be a tuple of this set such that $i^*$ is minimal.
	In other words, $v^*$ is a node that is closest to $t$ among theses nodes.
	We manually set $(v^*, i^*)$ to $(s, k)$ in the case that the set is empty.
	
	\begin{claim} \label{claim:sat_one} Some arc on a shortest path from $v^*$ to $t$ in $G^*$ is saturated by $\bm y_e$.
	\end{claim}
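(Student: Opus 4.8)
The plan is to prove the stronger statement that the flow $\bm z$ at the end of the pushing phase already saturates some arc of a shortest $v^*$-$t$-path, and then to check that the clean-up and the final conversion $\bm y_{vw}=\bm z_{vw}-\bm z_{wv}$ do not destroy this saturation. First I would pin down $v^*$. From \Cref{claim:Y_is_excess} a node can carry nonzero $\bm Y$ only at level $k-\dist_{G^*}(s,\cdot)$, and the paragraph following it shows that $\bm z$ is supported only on arcs of shortest $s$-$t$-paths; hence $\dist_{G^*}(s,v^*)=k-i^*$, while positive excess forces positive inflow and thus $\bm a_{i^*,v^*}>0$, i.e.\ a $v^*$-$t$-path of length $i^*$ exists. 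Combined with $\dist_{G^*}(s,t)=k$ and the triangle inequality this gives $\dist_{G^*}(v^*,t)=i^*$, so the shortest $v^*$-$t$-paths have length exactly $i^*$. I fix $P=(v^*=p_0,p_1,\dots,p_{i^*}=t)$ to be the lexicographically smallest such path and set $c^*\coloneqq\min_j \bm c_{p_jp_{j+1}}>0$. For the degenerate choice $v^*=s$ (empty excess set) the same data comes directly from $\dist_{G^*}(s,t)=k=i^*$.

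The core is an induction on the level $i$ establishing the following: if a node $u$ with $k-\dist_{G^*}(s,u)=i$ satisfies $\bm Y_u^{i}\geq c_u$ at the moment it is processed, where $c_u$ is the bottleneck of the lexicographically smallest shortest $u$-$t$-path $Q=(u=q_0,\dots,q_i=t)$, then $\bm z$ saturates some arc of a shortest $u$-$t$-path. The base case $i=1$ is immediate: the push into $t$ sets $\bm z_{ut}=\bm Y_u^1$, and since $c_u=\bm c_{ut}=\bm a_{1,u}$ together with the cap $\bm Y_u^1\leq \bm a_{1,u}$ forces $\bm Y_u^1=\bm c_{ut}$, the arc $(u,t)$ is saturated. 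For the step, lexicographic minimality of $Q$ guarantees that every successor of $u$ with index below $q_1$ lies on no shortest $u$-$t$-path and hence receives no flow, so $u$'s reservoir is still full when it pushes into $q_1$, and the pushed amount is $\bm f=\min\{\bm Y_u^{i},\,\bm c_{uq_1},\,\bm a_{i-1,q_1}-\bm Y_{q_1}^{i-1}\}$. Writing $c_{q_1}$ for the bottleneck of the tail of $Q$ (which is the lexicographically smallest shortest $q_1$-$t$-path), this yields a trichotomy: if $\bm c_{uq_1}$ is the minimizer then $(u,q_1)$ is saturated; if the third term is the minimizer then $q_1$ becomes \emph{full}, $\bm Y_{q_1}^{i-1}=\bm a_{i-1,q_1}\geq c_{q_1}$; and if $\bm Y_u^i$ is the minimizer then necessarily $\bm c_{uq_1}>c_u$ (otherwise the first arc would already have been saturated), whence $c_{q_1}=c_u$ and at least $c_{q_1}$ flows into $q_1$. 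In either of the last two cases $q_1$ reaches its own processing step with $\bm Y_{q_1}^{i-1}\geq c_{q_1}$, the induction hypothesis applies to $q_1$, and the saturated arc it produces, prefixed by $(u,q_1)$, lies on a shortest $u$-$t$-path.

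To conclude, I would apply the lemma. If $v^*=s$ then $\bm Y_s^k=\bm a_{k,s}\geq c^*$ and the lemma for $u=s$ finishes the proof. Otherwise $v^*$ has strictly positive excess after pushing; examining its push into $p_1$ (once more the first successor to receive flow) and using that the excess stays positive to exclude that all inflow enters $p_1$, I am left with ``$(v^*,p_1)$ saturated'' or ``$p_1$ full'', and in the latter case the lemma for $u=p_1$ applies since then $\bm Y_{p_1}^{i^*-1}=\bm a_{i^*-1,p_1}\geq c_{p_1}$. It remains to see that nothing downstream is undone. Because $v^*$ is chosen closest to $t$ among nodes with excess, every node strictly closer to $t$ has zero excess, and the clean-up only moves excess \emph{away} from $t$ (it adds to $\bm Y_v^{i+1}$ of a predecessor $v$); hence no backward reduction ever decreases $\bm z$ on an arc of a shortest $v^*$-$t$-path, all of whose heads sit strictly closer to $t$ than $v^*$. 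Finally, the reverse of a shortest-path arc is on no shortest path and so carries no flow, giving $\bm y_e=\bm c_e$ on the saturated residual arc $e$. I expect the inductive lemma to be the main obstacle: the crucial and non-obvious point is that whenever the greedy push fails to saturate the next arc of $P$, it must instead fill the next node up to its fattest-path value, which is exactly the hypothesis needed to descend one level closer to $t$ and repeat the argument.
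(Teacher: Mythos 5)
Your proof is correct and rests on the same pillars as the paper's: the lexicographically smallest shortest $v^*$-$t$-path, the fattest-path values $\bm a_{i,v}$ as caps, the fact that lex-minimality makes the path arc the \emph{first} to receive positive flow at each node, and the observation that the clean-up only reduces flow on arcs whose heads are at distance at least $i^*$ from $t$, hence never on arcs of $P$. The difference lies in the key lemma. The paper asserts directly that after the pushing phase $\bm z_e \geq \bm c_{\min}$ holds for \emph{every} arc $e\in P$, where $\bm c_{\min}$ is the global bottleneck of $P$, and then saturates the bottleneck arc; the justification for the non-first arcs is a single sentence. You instead run a downward induction along $P$ with the invariant ``either some arc of $P$ is already saturated, or the current node's reservoir is at least the bottleneck of the remaining tail of $P$,'' driven by the trichotomy in $\bm f = \min\{\bm Y_v^i, \bm c_{vw}, \bm a_{i-1,w}-\bm Y_w^{i-1}\}$. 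This is arguably the more robust formulation: when the third term is the minimizer because the next path node was \emph{pre-filled} by a smaller-index node off the path (which lex-minimality of $P$ does not exclude, since that node need not be reachable from $v^*$), the flow on the arc $p_jp_{j+1}$ itself can stay below $\bm c_{\min}$, so the paper's literal per-arc lower bound requires exactly your ``full node'' branch to be repaired -- at the price that your argument saturates \emph{some} arc of $P$ rather than specifically its bottleneck arc, which is all the claim needs. Your handling of $v^*$ itself (using positivity of the final excess to rule out the $\bm Y$-term as minimizer, covering $v^*=s$ uniformly), the verification that the tail of a lex-minimal path is lex-minimal, and the final step $\bm y_e = \bm z_e - \bm z_{\text{reverse}} = \bm c_e$ via the support property of $\bm z$ are all sound; as proof obligations for a full write-up you should make explicit the invariants $\bm Y_w^{i-1}\leq \bm a_{i-1,w}$ and the fact that each arc receives flow in exactly one iteration, both of which you use implicitly and both of which follow from \Cref{claim:Y_is_excess} and the loop structure.
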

	\begin{subproof}[Proof of \Cref{claim:sat_one}]
		Among all these paths between $v^*$ and $t$ of length $i^*$ we consider the path $P$ which has lexicographically the
		smallest string of node indices. Let $\bm c_{\min}$ be the minimal residual capacity along this path $P$.
		
		For all nodes $v$ along $P$ (including $v^*$) we have $\bm a_{i,v} \geq \bm c_{\min}$, where $i$ is the distance from~$v$ to $t$ along $P$, since the fattest path from $v$ to $t$ has to have at least the residual capacity of $P$.
		
		After the pushing procedure it holds that $\bm z_e \geq \bm c_{\min}$ for all $e \in P$. This is true for the first arc
		on $P$, since we have excess flow at node $v^*$ remaining (after pushing), hence, we certainly pushed at least $\bm
		c_{\min} \leq \bm a_{i^*-1,w}$ into the first arc $v^*w$ of $P$. (This is also true if $v^* = s$.) For the remaining
		arcs of $P$ it is true, because by the lexicographical minimality of $P$, the algorithm always pushes a flow value that is greater or equal to $\bm c_{\min}$ first along the next arc on $P$.
		
		During the clean-up, this property remains valid as we only reduce flow on arcs that have a distance of more than $i^*$ from $t$.
		
		Hence, an arc $e \in P$ with $\bm c_e = \bm c_{\min}$ is saturated at the very end of the subroutine.
	\end{subproof}
	In conclusion, $\bm y$ is a feasible $s$-$t$-flow in the residual network that has positive value only on paths of length $k$ and saturates at least one arc. This finalizes the proof of \Cref{Thm:augmenting_flow}.
\end{proof}

\subsection{Analysis of the Main Routine}

The following theorem states that \Cref{Alg:max_flow} correctly computes a maximum flow. It turns out that the properties proven in \Cref{Thm:augmenting_flow} about the subroutine \texttt{FindAugmentingFlow$_k$} are in fact sufficient to obtain correctness of the main routine as for the algorithms by Edmonds-Karp and Dinic; see, e.g., \cite{kortevygen}.

\begin{restatable}{theorem}{thmmaapcorrectness}\label{Thm:MAAPCorrectness}
	Let $G=(V,E)$ be a fixed directed graph with $s,t\in V$. For capacities $(\bm\nu_e)_{e \in E}$ as input, the MAAP given by \Cref{Alg:max_flow} returns a maximum $s$-$t$-flow~$(\bm x_{e})_{e \in \vec E}$.
\end{restatable}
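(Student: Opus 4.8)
The plan is to run the standard correctness analysis for shortest-augmenting-path / Dinic-type algorithms, using \Cref{Thm:augmenting_flow} as a black box for the inner subroutine. Throughout the execution of \Cref{Alg:max_flow} I would maintain the invariant that $(\bm x_e)_{e\in\vec E}$ is a feasible $s$-$t$-flow and that $(\bm c_e)_{e\in E}$ are exactly its residual capacities, i.e.\ $\bm c_{uv}=\bm\nu_{uv}-\bm x_{uv}$ and $\bm c_{vu}=\bm\nu_{vu}+\bm x_{uv}$. This holds after the initialization block, and it is preserved by the augmentation block: since \Cref{Thm:augmenting_flow} guarantees $-\bm c_{vu}\le\bm y_{uv}\le\bm c_{uv}$, after the update we still have $\bm c_{uv},\bm c_{vu}\ge 0$, the residual identities continue to hold, and because $\bm y$ is itself an $s$-$t$-flow, so is $\bm x+\bm y$. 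Hence feasibility is never in danger, and it only remains to prove that the returned flow admits no augmenting path in $G^*$, which by the max-flow min-cut theorem is equivalent to maximality.

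The core of the argument is a monotonicity lemma: augmenting with a flow whose positive support lies only on shortest $s$-$t$-paths of $G^*$ never decreases $\dist_{G^*}(s,v)$ or $\dist_{G^*}(v,t)$ for any node $v$. I would prove this with the usual distance-label argument: the only arcs appearing in the new residual network are reverses $vu$ of arcs $uv$ that carried flow, and such an arc $uv$ lies on a shortest path, so $\dist_{G^*}(s,v)=\dist_{G^*}(s,u)+1$; consequently every residual arc, old or new, raises the label $\dist_{G^*}(s,\cdot)$ by at most one, so no $s$-$t$-path can become shorter. Combined with \Cref{Thm:augmenting_flow}, whose hypothesis is $\dist_{G^*}(s,t)\ge k$, this sets up the main induction: at the beginning of the outer iteration with index $k$ we have $\dist_{G^*}(s,t)\ge k$. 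The hypothesis of the subroutine is then met on every call, since monotonicity keeps $\dist_{G^*}(s,t)\ge k$ throughout the phase; and whenever $\dist_{G^*}(s,t)>k$ the subroutine returns the zero flow, as no $s$-$t$-path of length exactly $k$ exists.

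The crux, and the step I expect to be most delicate, is to show that the $m$ inner iterations of a single phase suffice to push $\dist_{G^*}(s,t)$ strictly beyond $k$. Unlike textbook Dinic, the subroutine recomputes shortest paths on every call rather than freezing a level graph, so I must argue that the level structure nevertheless cannot regenerate. The key is a freezing lemma: while $\dist_{G^*}(s,t)=k$, any node $w$ lying on a current shortest $s$-$t$-path satisfies $\dist_{G^*}(s,w)=\ell(w)$, where $\ell$ denotes the distances at the start of the phase. Indeed, $\dist_{G^*}(s,w)+\dist_{G^*}(w,t)=k$ together with the inequalities $\dist_{G^*}(s,w)\ge\ell(w)$ and $\dist_{G^*}(w,t)\ge k-\ell(w)$ (the latter from $\dist(\cdot,t)$-monotonicity and $\ell(w)+\dist(w,t)\ge k$ at the phase start) forces equality in both. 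Since the subroutine pushes flow only along current shortest paths, every flow-carrying arc $uv$ has frozen labels with $\ell(v)=\ell(u)+1$; hence every newly created residual arc is a backward arc $vu$ with $\ell(u)=\ell(v)-1$, which therefore can never lie on a shortest path while $\dist_{G^*}(s,t)=k$. Consequently the set of residual arcs lying on a shortest $s$-$t$-path only loses members during the phase, while by \Cref{Thm:augmenting_flow} each nonzero augmentation saturates at least one such arc and thus removes it from this set. As the set has at most $m$ arcs, after at most $m$ augmentations it is empty, no $s$-$t$-path of length $k$ remains, and by monotonicity $\dist_{G^*}(s,t)\ge k+1$.

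Feeding this into the induction yields $\dist_{G^*}(s,t)\ge k+1$ at the start of iteration $k+1$, so after the final phase $k=n-1$ we obtain $\dist_{G^*}(s,t)\ge n$. Since any simple path in an $n$-node graph uses at most $n-1$ arcs, this means $s$ cannot reach $t$ in $G^*$. The set of nodes reachable from $s$ in $G^*$ then induces an $s$-$t$-cut whose capacity equals the value of $\bm x$, so $\bm x$ is a maximum $s$-$t$-flow, which is exactly what \Cref{Alg:max_flow} returns. In the write-up I would first establish the monotonicity and freezing lemmas, then derive the per-phase bound of $m$ augmentations, and finally assemble the distance induction together with the concluding cut argument.
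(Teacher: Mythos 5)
Your proposal is correct and follows essentially the same route as the paper's proof: maintain the feasibility/residual-capacity invariant via \Cref{Thm:augmenting_flow}, show that during phase $k$ the set of residual arcs lying on $s$-$t$-paths of length exactly $k$ can only shrink while each nonzero augmentation saturates (and removes) one of them, conclude that $m$ inner iterations empty this set, and finish with disconnection of $s$ and $t$ after phase $k=n-1$. The only difference is presentational: where you establish distance monotonicity for $\dist_{G^*}(s,\cdot)$ and $\dist_{G^*}(\cdot,t)$ plus a label-freezing lemma valid throughout the phase, the paper argues per inner iteration via the auxiliary graph $G'$ obtained by adding the reverses of the arcs in $E^*_k$ to the residual network and showing that shortest paths in $G'$ avoid the new arcs --- two standard packagings of the same Edmonds--Karp/Dinic analysis.
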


\begin{proof}
	It is a well-known fact that a feasible $s$-$t$-flow is maximum if and only if the corresponding residual network does not contain any $s$-$t$-path, see e.g.~\cite[Theorem~8.5]{kortevygen}. Since any simple path has length at most $n-1$, it suffices to show the following claim.
	\begin{claim}\label{Claim:outer}
		After iteration~$k$ of the \textbf{for} loop in line~\ref{Line:outerFor} of \Cref{Alg:max_flow}, $\bm x$ is a feasible $s$-$t$-flow with corresponding residual capacities $\bm c$ such that no $s$-$t$-path of length at most $k$ remains in the residual network.
	\end{claim}
	
	Given a residual network $(V,E^*)$, let $E^*_k$ be the set of arcs that lie on an $s$-$t$-path of length exactly $k$. If the distance from $s$ to $t$ is exactly $k$, then these arcs coincide with the arcs of the so-called \emph{level graph} used in Dinic's algorithm, compare \cite{dinic1970algorithm, kortevygen}.
	
	We will show \Cref{Claim:outer} about the outer \textbf{for} loop by induction on $k$ using a similar claim about the inner \textbf{for} loop.	
	
	\begin{claim}\label{Claim:inner}
		Suppose, at the beginning of an iteration of the \textbf{for} loop in line~\ref{Line:innerFor}, it holds that
		\begin{enumerate}[(i)]
			\item $\bm x$ is a feasible $s$-$t$-flow with corresponding residual capacities $\bm c$, and\label{firstitem}
			\item the length of the shortest $s$-$t$-path in the residual network is at least $k$.\label{lastitem}
		\end{enumerate}
		Then, after that iteration, properties \eqref{firstitem} and \eqref{lastitem} do still hold. Moreover, if $E^*_k$ is nonempty, then its cardinality is strictly reduced by that iteration.
	\end{claim}
	
	\begin{subproof}[Proof of \Cref{Claim:inner}]
		Since \eqref{firstitem} and $\eqref{lastitem}$ hold at the beginning of the iteration, \Cref{Thm:augmenting_flow} implies that the flow $\bm y$ found in line~\ref{Line:sub} fulfills flow conservation and is bounded by $-\bm c_{vu} \leq \bm y_{uv} \leq \bm c_{uv}$ for each $uv\in \vec{E}$. Hence, we obtain that, after updating $\bm x$ and $\bm c$ in lines \ref{Line:startUpdate} to \ref{Line:endUpdate}, $\bm x$ is still a feasible flow that respects flow conservation and capacities, and $\bm c$ are the corresponding new residual capacities. Thus, property \eqref{firstitem} is also true at the end of the iteration.
		
		Let $G^*=(V,E^*)$ and $\tilde{G}^*=(V,\tilde{E}^*)$ be the residual graphs before and after the iteration, respectively. Let $E^*_k$ and $\tilde E^*_k$ be the set of arcs on $s$-$t$-paths of length $k$ in $G^*$ and $\tilde{G}^*$, respectively. Finally, let $E'$ be the union of $E^*$ with the reverse arcs of $E^*_k$ and let $G'=(V,E')$.
		
		Since, by \Cref{Thm:augmenting_flow}, we only augment along arcs in $E^*_k$, it follows that $\tilde{E}^*\subseteq E'$. Let $P$ be a shortest $s$-$t$-path in $G'$ and suppose for contradiction that $P$ contains an arc that is not in $E^*$. Let $e=uv$ be the first of all such arcs of $P$ and let $P_u$ be the subpath of $P$ until node $u$. Then the reverse arc $vu$ must be in $E^*_k$. In particular, $\dist_{G^*}(s,v)<\dist_{G^*}(s,u)\leq \abs{E(P_u)}$, where the second inequality follows because $P_u$ uses only arcs in $E^*$. Hence, replacing the part of $P$ from $s$ to $v$ by a shortest $s$-$v$-path in $G^*$ reduces the length of $P$ by at least two, contradicting that $P$ is a shortest path in~$G'$.
		
		Thus, all shortest paths in $G'$ only contain arcs from $E^*$. In particular, they have length at least $k$. Hence, all paths in $G'$ that contain an arc that is not in $E^*$ have length larger than $k$. Since $\tilde{E}^*\subseteq E'$, this also holds for paths in $\tilde{G}^*$, which implies \eqref{lastitem}. It also implies that $\tilde E^*_k\subseteq E^*_k$. Moreover, by \Cref{Thm:augmenting_flow}, if $E^*_k$ is nonempty, at least one arc of $E^*_k$ is saturated during the iteration, and thus removed from $E^*_k$. Thus, the cardinality of $E^*_k$ becomes strictly smaller.
	\end{subproof}
	
	Using \Cref{Claim:inner}, we are now able to show \Cref{Claim:outer}.
	
	\begin{subproof}[Proof of \Cref{Claim:outer}]
		We use induction on~$k$. For the induction start, note that before entering the \textbf{for} loop in line~\ref{Line:outerFor}, that is, so to speak, after iteration $0$, obviously no $s$-$t$-path of length~$0$ can exist in the residual network. Also note that after the initialization in lines~\ref{Line:startIni} to~\ref{Line:endIni}, $\bm x$ is the zero flow, which is obviously feasible, and $\bm c$ contains the corresponding residual capacities.
		
		For the induction step, consider the $k$-th iteration. By the induction hypothesis, we know that, at the beginning of the $k$-th iteration, $\bm x$ is a feasible $s$-$t$-flow with corresponding residual capacities $\bm c$ and the distance from $s$ to $t$ in the residual network is at least $k$. Observe that by \Cref{Claim:inner}, these properties are maintained throughout the entire $k$-th iteration. In addition, observe that at the beginning of the $k$-th iteration, we have $\abs{E^*_k}\leq m$. Since, due to \Cref{Claim:inner}, $\abs{E^*_k}$ strictly decreases with each inner iteration until it is zero, it follows that after the $m$ inner iterations, the residual network does not contain an $s$-$t$-path of length $k$ any more, which completes the induction.
	\end{subproof}
	
	Since any simple path has length at most~$n-1$, \Cref{Claim:outer} implies that, at the end of iteration~$k=n-1$, the nodes $s$ and $t$ must be disconnected in the residual network. Hence, \Cref{Alg:max_flow} returns a maximum flow, which concludes the proof of \Cref{Thm:MAAPCorrectness}.
\end{proof}

By applying the definition of our complexity measures to \Cref{Alg:max_flow} and the subroutine, we prove the following bounds.

\begin{restatable}{theorem}{thmcomplexity}\label{Thm:MAAPComplexity}
	For the complexity measures of the MAAP $A$ defined by \Cref{Alg:max_flow} it holds that $d(A), s(A) \in\mathcal{O}(n^2m^2)$ and~\mbox{$w(A) \in \mathcal{O}(n^2)$}.
\end{restatable}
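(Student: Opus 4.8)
The plan is to bound the three complexity measures by walking through both \Cref{Alg:max_flow} and its subroutine \Cref{Alg:augmenting_flow} block by block, applying the recursive rules for $d$, $w$, and $s$ from \Cref{Sec:MAAP}. Throughout I use that every purely affine assignment (all the zero-initializations, the flow and capacity updates such as $\bm x_{uv} \leftarrow \bm x_{uv} + \bm y_{uv}$, and the final $\bm y_{vw} \leftarrow \bm z_{vw} - \bm z_{wv}$) contributes $0$ to each measure, so only the lines containing a genuine $\max$ or $\min$ matter. I would first establish bounds for \texttt{FindAugmentingFlow}$_k$ and then feed these into the two nested \textbf{For-Do} loops of the main routine.

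For the subroutine I would treat its phases as a sequence. The fattest-path line $\bm a_{i,v}\leftarrow \max_{w\in N^+_v\setminus\{t\}}\min\{\bm a_{i-1,w},\bm c_{vw}\}$ is a maximum of minima and hence decomposes into a \textbf{Do-Parallel} block that computes the $|N^+_v|$ inner minima (each a $2$-term min, so $d=1$ and $w,s=\mathcal{O}(1)$) followed by a single max over $|N^+_v|$ terms ($d=\mathcal{O}(\log n)$, $w,s=\mathcal{O}(|N^+_v|)$); for one node $v$ this gives $d=\mathcal{O}(\log n)$ and $w,s=\mathcal{O}(|N^+_v|)$. The enclosing \textbf{For-Do-Parallel} over $v$ sums widths and sizes, and the degree identity $\sum_{v}|N^+_v|=\mathcal{O}(m)$ yields $w,s=\mathcal{O}(m)$ and $d=\mathcal{O}(\log n)$ for each fixed $i$; the sequential loop $i=2,\dots,k$ then sums depths and sizes to $d=\mathcal{O}(k\log n)$ and $s=\mathcal{O}(km)$, while $w$ stays $\mathcal{O}(m)$. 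The pushing and clean-up phases are triply nested \emph{sequential} \textbf{For-Do} loops whose innermost bodies are a single $3$-term (resp.\ $2$-term) $\min$ of constant cost; summing over all iterations via $\sum_{i\le k}\sum_v|N^+_v|=\mathcal{O}(km)$ gives $d=s=\mathcal{O}(km)$ and $w=\mathcal{O}(1)$ for both. Taking the sequence of all phases, the subroutine \texttt{FindAugmentingFlow}$_k$ has $d,s=\mathcal{O}(km)=\mathcal{O}(nm)$ and $w=\mathcal{O}(m)$, using $k\le n-1$.

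For the main routine the two \textbf{For-Do} loops in lines~\ref{Line:outerFor} and~\ref{Line:innerFor} are sequential, so depth and size add while width is the maximum. Each of the $(n-1)\cdot m$ subroutine calls costs $\mathcal{O}(nm)$ in both depth and size and the surrounding augmentation updates are affine (and thus free), so $d(A),s(A)=\mathcal{O}\big((n-1)\cdot m\cdot nm\big)=\mathcal{O}(n^2m^2)$. The width is the maximum over all calls, namely $w(A)=\mathcal{O}(m)$, and since $E\subseteq V^2\setminus\{vv\mid v\in V\}$ has at most $n(n-1)$ arcs we have $m=\mathcal{O}(n^2)$, hence $w(A)=\mathcal{O}(n^2)$, as claimed.

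The only genuinely delicate points are bookkeeping ones: correctly rewriting the nested max-of-min expression as a parallel-min block followed by a max so that the per-line measures apply, and consistently tracking which loops are parallel (widths add, depths take the maximum) versus sequential (widths take the maximum, depths and sizes add). Once the degree-sum estimate $\sum_v|N^+_v|=\mathcal{O}(m)$ is in place, all bounds follow by mechanically applying the recursive definitions, so I expect no conceptual obstacle beyond this careful accounting.
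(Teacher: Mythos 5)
Your proposal is correct and follows essentially the same route as the paper's proof: identify the pushing and clean-up phases as the $d,s$-bottleneck of \texttt{FindAugmentingFlow}$_k$ with $\mathcal{O}(km)$ sequential constant-cost iterations, bound the subroutine by $\mathcal{O}(nm)$, and multiply by the $\mathcal{O}(nm)$ sequential calls of the main routine to get $\mathcal{O}(n^2m^2)$. Your accounting is in places even more careful than the paper's (the explicit max-of-min decomposition with the degree-sum bound $\sum_v |N^+_v| = \mathcal{O}(m)$, and the subroutine width $\mathcal{O}(m)$ rather than the paper's $\mathcal{O}(m+kn)$, which charges the initialization loop), and both bounds land in the claimed $\mathcal{O}(n^2)$.
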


\begin{proof}
	We first analyze the MAAP~$A'$ given in \Cref{Alg:augmenting_flow}.	
	Concerning the complexity measures $d$ and $s$, the bottleneck of \Cref{Alg:augmenting_flow} is given by the two blocks consisting of lines~\ref{line:outer_for} to~\ref{line:increase_of_Y} as well as lines~\ref{line:begin_clean_up} to~\ref{line:increase_of_Y_by_b}. Each of these blocks has $\mathcal{O}(km)$ sequential iterations and the body of the innermost \textbf{for} loop has constant complexity. Thus, we have $d(A'), s(A')\in\mathcal{O}(km)\subseteq\mathcal{O}(nm)$ for the overall subroutine.
	
	Concerning measure $w$, the bottleneck is in fact the initialization in lines~\ref{line:start} to~\ref{line:init_fattest}, such that we have $w(A')\in\mathcal{O}(m+kn)\subseteq\mathcal{O}(n^2)$.
	
	Now consider the main routine in \Cref{Alg:max_flow}. For all three complexity measures, the bottleneck is the call of the subroutine in line~\ref{Line:sub} within the two \textbf{for} loops. Since we have a total of $\mathcal{O}(nm)$ sequential iterations, the claimed complexity measures follow. Note that the parallel \textbf{for} loops in lines~\ref{Line:startIni} and~\ref{Line:startUpdate} do not increase the measure $w(A)\in\mathcal{O}(n^2)$.
\end{proof}

We remark that the reported complexity $w(A)\in\mathcal{O}(n^2)$ in \Cref{Thm:MAAPComplexity} is actually suboptimal and can be replaced with $\mathcal{O}(1)$ instead, if the parallel \textbf{for} loops in the MAAP and the subroutine are replaced with sequential ones. The asymptotics of $d(A)$ and $s(A)$ remain unchanged because the bottleneck parts of the MAAP are already of sequential nature. Still, we used parallel \textbf{for} loops in \Cref{Alg:max_flow} and the subroutine in order to point out at which points the ability of NNs to parallelize can be used in a straightforward way.
Combining the previous observations with \Cref{Prop:NN-MAAP} we obtain \Cref{Thm:MaxFlow}.

\thmMaxFlow*

Note that the objective value can be computed from the solution by simply adding up all flow values of arcs leaving the source node $s$. Therefore, this can be done by an NN with the same asymptotic size bounds, too.

Finally, observe that the total computational work of \Cref{Alg:max_flow}, represented by $s(A)$ and, equivalently, the size of the resulting NN, differs only by a factor of $n$ from the standard running time bound $\mathcal{O}(nm^2)$ of the Edmonds-Karp algorithm; see~\cite[Corollary~8.15]{kortevygen}. While the number of augmenting steps is in $\mathcal{O}(nm)$ for both algorithms, the difference lies in finding the augmenting flow. While the Edmonds-Karp algorithm finds the shortest path in the residual network in $\mathcal{O}(m)$ time, our subroutine requires $\mathcal{O}(mn)$ computational work.

\section{Future Research}

Our grand vision, to which we contribute in this paper, is to determine how the computational model defined by ReLU~NNs compares to strongly polynomial time algorithms. In other words, is there a CPWL function (related to a CO problem or not) which can be evaluated in strongly polynomial time, but for which no polyonmial-size NNs exist?
Resolving this question involves, of course, the probably challenging task to prove nontrivial lower bounds on the size of NNs to compute certain functions. Particular candidate problems, for which the existence of polynomial-size NNs is open, are, for example, the Assignment Problem, different versions of Weighted Matching Problems, or Minimum Cost Flow Problems.

Another direct question for further research is to what extent the size of the NN constructions in this paper can be improved. For example, is the size of $\mathcal{O}(n^2m^2)$ to compute maximum flows best possible or are smaller constructions conceivable? Even though highly parallel architectures (polylogarithmic depth) with polynomial width are unlikely,
is it still possible to make use of NNs' ability to parallelize and find a construction with a depth that is a polynomial of lower degree than~$n^2m^2$?

We hope that this paper promotes further research about this intriguing model of computation defined through neural networks and its connections to classical (combinatorial optimization) algorithms.

\backmatter

%\bmhead{Supplementary information}
%
%If your article has accompanying supplementary file/s please state so here. 
%
%Authors reporting data from electrophoretic gels and blots should supply the full unprocessed scans for key as part of their Supplementary information. This may be requested by the editorial team/s if it is missing.
%
%Please refer to Journal-level guidance for any specific requirements.

\bmhead{Disclosure of Funding and Competing Interests}
A large portion of this work was completed while both authors were affiliated with TU Berlin and while Christoph Hertrich was affiliated with the London School of Economics and Political Science.
Christoph Hertrich acknowledges funding by DFG-GRK 2434 Facets of Complexity and by the European Research Council (ERC) under the European Union's Horizon 2020 research and innovation programme (grant agreement ScaleOpt-757481). Leon Sering acknowledges funding by DFG Excellence Cluster MATH+ (EXC-2046/1, project ID: 390685689). The authors declare that they do not have further competing interests.

\bmhead{Acknowledgments}

We thank Max Klimm, Jennifer Manke, Arturo Merino, Martin Skutella, and L{\'a}szl{\'o} V{\'e}gh for many inspiring and fruitful discussions and valuable comments. We thank the anonymous reviewers of both, the conference and the journal version, for their helpful comments to improve the article.

%\begin{appendices}
%
%\section{Section title of first appendix}\label{secA1}
%
%An appendix contains supplementary information that is not an essential part of the text itself but which may be helpful in providing a more comprehensive understanding of the research problem or it is information that is too cumbersome to be included in the body of the paper.
%
%%%=============================================%%
%%% For submissions to Nature Portfolio Journals %%
%%% please use the heading ``Extended Data''.   %%
%%%=============================================%%
%
%%%=============================================================%%
%%% Sample for another appendix section			       %%
%%%=============================================================%%
%
%%% \section{Example of another appendix section}\label{secA2}%
%%% Appendices may be used for helpful, supporting or essential material that would otherwise 
%%% clutter, break up or be distracting to the text. Appendices can consist of sections, figures, 
%%% tables and equations etc.
%
%\end{appendices}

%%===========================================================================================%%
%% If you are submitting to one of the Nature Portfolio journals, using the eJP submission   %%
%% system, please include the references within the manuscript file itself. You may do this  %%
%% by copying the reference list from your .bbl file, paste it into the main manuscript .tex %%
%% file, and delete the associated \verb+\bibliography+ commands.                            %%
%%===========================================================================================%%

\bibliography{neuralmaxflow}% common bib file
%% if required, the content of .bbl file can be included here once bbl is generated
%%\input sn-article.bbl

\end{document}